\documentclass[10pt]{article}

\usepackage[T1]{fontenc}
\usepackage[utf8]{inputenc} 
\usepackage{newtxtext}
\usepackage{newtxmath}
\usepackage{microtype}
\usepackage{geometry}
\usepackage{ragged2e}
\justifying

\usepackage{mathtools}

\usepackage{amsthm}

\usepackage{amssymb}

\theoremstyle{plain}
\newtheorem{theorem}{Theorem}[section]
\newtheorem{lemma}[theorem]{Lemma}
\newtheorem{proposition}[theorem]{Proposition}
\newtheorem{corollary}[theorem]{Corollary}
\theoremstyle{definition}
\newtheorem{definition}[theorem]{Definition}

\newtheorem{remark}[theorem]{Remark}

\usepackage{titlesec}
\titleformat{\section}{\normalfont\Large\bfseries}{\thesection}{1em}{}
\titleformat{\subsection}{\normalfont\large\bfseries}{\thesubsection}{1em}{}
\titlespacing*{\section}{0pt}{12pt plus 4pt minus 2pt}{6pt plus 2pt minus 2pt}
\titlespacing*{\subsection}{0pt}{10pt plus 4pt minus 2pt}{4pt plus 2pt minus 2pt}

\usepackage{needspace}
\usepackage{etoolbox}
\pretocmd{\section}{\Needspace{8\baselineskip}}{}{}
\pretocmd{\subsection}{\Needspace{7\baselineskip}}{}{}
\pretocmd{\subsubsection}{\Needspace{6\baselineskip}}{}{}
\pretocmd{\paragraph}{\Needspace{5\baselineskip}}{}{}
\pretocmd{\subparagraph}{\Needspace{4\baselineskip}}{}{}

\AtBeginEnvironment{theorem}{\Needspace{7\baselineskip}}
\AtBeginEnvironment{lemma}{\Needspace{7\baselineskip}}
\AtBeginEnvironment{proposition}{\Needspace{7\baselineskip}}
\AtBeginEnvironment{corollary}{\Needspace{6\baselineskip}}
\AtBeginEnvironment{definition}{\Needspace{6\baselineskip}}
\AtBeginEnvironment{example}{\Needspace{6\baselineskip}}
\AtBeginEnvironment{remark}{\Needspace{5\baselineskip}}

\usepackage{graphicx}
\usepackage{float}
\usepackage[section]{placeins}
\usepackage{flafter}
\usepackage{algorithm}
\usepackage{algpseudocode}
\usepackage{booktabs}
\usepackage{siunitx}
\usepackage{multirow}
\usepackage{subcaption}

\usepackage{authblk}
\usepackage[hidelinks,
  pdfauthor={Tobias Nygaard}
]{hyperref}

\title{pathsig: A GPU-Accelerated Library for Truncated and Projected Path Signatures}

\author{Tobias Nygaard}
\affil{Artificial Intelligence and Mathematics Research Lab, UK \\
\href{mailto:tobias.nygard.24@ucl.ac.uk}{\texttt{tobias.nygard.24@ucl.ac.uk}}}

\date{} 

\usepackage{enumitem}
\usepackage{xparse}
\NewDocumentCommand{\sig}{g g}{%
  \IfNoValueTF{#1}{\mathbf{S}}{%
    \IfNoValueTF{#2}{\mathbf{S}^{#1}}{\mathbf{S}_{#1}^{#2}}%
  }%
}

\newcommand{\R}{\mathbb{R}}
\newcommand{\N}{\mathbb{N}}

\newcommand{\word}[1]{\mathit{#1}}
\newcommand{\eps}{\varepsilon}
\newcommand{\W}{\mathcal{W}}
\newcommand{\loss}{L}

\begin{document}
\maketitle

\begin{abstract}
Path signatures provide a rich representation of sequential data, with strong theoretical guarantees and good performance in a variety of machine-learning tasks. While signatures have progressed from fixed feature extractors to trainable components of machine-learning models, existing libraries often lack the required scalability for large-scale, gradient-based learning. To address this gap, this paper introduces \texttt{pathsig}, a PyTorch-native library that computes path signatures directly in the word basis. By using CUDA kernels to update signature coefficients in parallel over prefix-closed word sets, \texttt{pathsig} achieves high GPU throughput and near-minimal peak memory. Compared with other libraries, \texttt{pathsig} achieves $10$--$30\times$ speedups for computation of truncated signatures and up to $4$--$10\times$ speedups in training that require backpropagation through the signature. Beyond regular truncation, \texttt{pathsig} supports projections of the (infinite-dimensional) signature onto user-specified sets of words and anisotropic truncation motivated by inhomogeneous path regularity, enabling more compact representations that can reduce dimensionality, redundancy, and computational cost.
\end{abstract}

\noindent \textbf{Keywords:} path signatures, rough path theory, dimensionality reduction, PyTorch, machine learning.

\section{Introduction}
Signature methods have become a widely used tool for working with sequential data in machine learning. 
At their core is the path signature~\cite{chen1958},
$$
\sig_{s,t}(X)
=
1+\sum_{n=1}^{\infty}\int_{s<u_1<\cdots<u_n<t} dX_{u_1}\otimes\cdots\otimes dX_{u_n}
\in T((\mathbb{R}^d)),
$$
which encodes a path through its iterated integrals. The signature yields a rich feature representation of path-valued data, and universal approximation results show that, under suitable conditions, linear functionals of the signature approximate a large class of continuous path-dependent functionals~\cite{signature_approx,cuchiero_uat_cadlag}. The universality of signatures, together with their intrinsic properties, such as invariance under time reparametrisation and the resulting robustness to irregular sampling, makes signatures a principled feature representation for sequential data~\cite{lyons2014roughpaths}.

In practise, one replaces $\sig_{s,t}(X)$ with a finite-dimensional representation, most commonly the truncated signature $\sig^{\le N}_{s,t}(X)\in T_{\le N}(\mathbb{R}^d)$. Compared with many widely used approaches to sequential modelling, computing truncated signatures and differentiating through them can be substantially more expensive. To address this, we introduce \texttt{pathsig}, which combines significant computational improvements with increased flexibility, making path signatures practical and useful across a broader range of applications. We provide \texttt{pathsig} as an open-source, \texttt{pip}-installable PyTorch package for usage with GPUs. Installation and usage are documented on \url{https://pathsig.readthedocs.io}.

\FloatBarrier
\section{Path Signatures, Tensor Algebras, and Word Indexing}
We start with a brief introduction to path signatures, the spaces in which they take values, and the word notation used throughout. For further background, see e.g. \cite{chevyrev2016primer,LyonsMcLeod2022SignatureMethodsML,lyons2007roughpaths, friz2010multidimensional}.

\subsection{Tensor Algebra}\label{subsec:tensor-algebra}
The signature is a graded object whose level-$n$ component lies in the $n$-fold tensor power $(\R^d)^{\otimes n}$. We therefore recall the tensor algebra and its completion, which collect all levels in a single space and whose product preserves this grading.

For $n\ge 1$, let $(\R^d)^{\otimes n}$ denote the $n$-fold tensor product of $\R^d$, and set $(\R^d)^{\otimes 0}:=\R$.

\begin{definition}[Tensor algebra]
The tensor algebra over $\R^d$ is the graded vector space
$$
T(\R^d):=\bigoplus_{n=0}^{\infty}(\R^d)^{\otimes n}.
$$
\end{definition}
Equivalently, $T(\R^d)$ consists of sequences $a=(a_n)_{n\ge0}$ with $a_n\in(\R^d)^{\otimes n}$ and $a_n=0$ for all but finitely many $n$. For $N\in\N$, the truncated tensor algebra is
$$
T_{\leq N}(\R^d):=\bigoplus_{n=0}^N(\R^d)^{\otimes n}.
$$

However, the signature of a path generally does not correspond to a finite sequence of tensors. It belongs to a larger infinite-dimensional space.

\begin{definition}[Completed tensor algebra]
The completed tensor algebra over $\R^d$ is
$$
T((\R^d)):=\prod_{n=0}^{\infty}(\R^d)^{\otimes n}.
$$
\end{definition}
An element $a\in T((\R^d))$ is an infinite sequence $a=(a_0,a_1,a_2,\dots)$ with $a_n\in(\R^d)^{\otimes n}$ for each $n$. We view elements of $T(\R^d)$ as noncommutative polynomials and elements of $T((\R^d))$ as noncommutative formal power series, with tensor level playing the role of degree. Addition and scalar multiplication act levelwise, with multiplication defined in the usual Cauchy sense: for $A=(a_n)_{n\ge0}$ and $B=(b_n)_{n\ge0}$ in $T((\R^d))$,
$$
A\otimes B = (c_n)_{n\ge0},
\qquad
c_n = \sum_{k=0}^{n} a_k \otimes b_{n-k}.
$$

For later reference, let $\pi_{\leq N}$ be the canonical projection $T\left(\left(\mathbb{R}^d\right)\right) \rightarrow T_{\leq N}\left(\mathbb{R}^d\right)$ onto levels $0, \ldots, N$, corresponding to truncation at depth $N$.

\subsection{Path Signatures}\label{subsec:sigs}
We briefly recall the definition of the signature of a path as the collection of its iterated integrals. Throughout, we assume bounded variation so that iterated integrals are understood in the Riemann--Stieltjes sense. More generally, the same definitions and algebraic identities extend to geometric rough paths as limits of smooth (e.g. piecewise linear) approximations; in particular, for continuous semimartingales the resulting iterated integrals are understood in the Stratonovich sense.

\begin{definition}[Path signature]
Let $X:[0,T]\to\mathbb R^d$ be a path of bounded variation and let $\Delta^n[s, t]:=\left\{\left(u_1, \ldots, u_n\right) \in[s, t]^n: u_1<\cdots<u_n\right\}$.
The signature of $X$ over $[s,t]\subseteq[0,T]$ is the formal tensor series
\begin{equation}\label{eq:tensor_sig}
\sig_{s,t}(X) = 1+\sum_{n=1}^{\infty} \int_{\Delta^n[s,t]} dX^{\otimes n}
\in T((\mathbb R^d)), 
\end{equation}
where
$$
\int_{\Delta^n[s,t]} dX^{\otimes n}
= \int_{s<u_1<\cdots<u_n<t} dX_{u_1}\otimes\cdots\otimes dX_{u_n}
\in (\mathbb R^d)^{\otimes n}.
$$
\end{definition}

Accordingly, the truncated signature at depth $N$ is $\sig^{\le N}_{s,t}(X):=\pi_{\leq N} \left(\sig_{s,t}(X)\right)\in T_{\le N}(\R^d)$.

\subsection{Words}
Since the signature takes values in $T\left(\left(\mathbb{R}^d\right)\right)$, which admits a canonical word-indexed basis over $\{1, \ldots, d\}$, we use word and letter notation. This notation simultaneously indexes the tensor basis and specifies the order of integration in the iterated integrals that appear in the signature.

We view the set $\mathcal{A}_d := \{1,\dots,d\}$, which indexes the standard basis $\left\{e_1, \ldots, e_d\right\}$ of $\mathbb{R}^d$, as an alphabet. The elements $i \in \mathcal{A}_d $ are called letters, and finite sequences of letters are called words.

\begin{definition}
A word of length $n \in \mathbb{N}_0$ is a sequence $\word{w} = (i_1, i_2, \ldots, i_n)$ with $i_j \in \mathcal{A}_d$ for each $j$. We use the following notation:
\begin{itemize}[nosep]
    \item The length of a word $\word{w}$ is denoted by $|\word{w}|$.
    \item The empty word, of length 0, is denoted by $\eps$.
    \item $\W_n := \{1, \ldots, d\}^n$ is the set of all words of length exactly $n$.
    \item $\W_{\leq N} := \bigcup_{n=0}^N \W_n$ is the set of all words of length at most $N$.
    \item $\mathcal{W}$ is the set of all words of finite length. 
\end{itemize}
\end{definition}
With this notation, each signature coefficient is an iterated integral indexed by a word $w=(i_1,\dots,i_n)\in\W$, which specifies the (non-commutative) order of integration:
$$
\sig_{s, t}(X, w):=\int_{s<u_1<\cdots<u_n<t} d X_{u_1}^{(i_1)} \cdots d X_{u_n}^{(i_n)},
\qquad \sig_{s,t}(X,\eps)=1.
$$
With the shorthand $e_w := e_{i_1}\otimes\cdots\otimes e_{i_n}$ for $w=(i_1,\dots,i_n)$, the signature can be written as the formal series
$$
\sig_{s,t}(X)=\sum_{w\in\W} \sig_{s,t}(X,w)\, e_w \in T\bigl((\R^d)\bigr).
$$

In addition to indexing signature coefficients, words carry an algebraic structure that encodes how signatures compose. In particular, the tensor-algebra product mirrors concatenation of words, and Chen's relation (Theorem~\ref{thm:chen}) can be expressed in terms of splitting and concatenation of words.

\begin{definition}[Concatenation]
For words $u = (i_1, \ldots, i_m)$ and $v = (j_1, \ldots, j_n)$, their concatenation is
$$
u \circ v = (i_1, \ldots, i_m, j_1, \ldots, j_n) \in \W_{m+n}.
$$
\end{definition}
Following~\cite{Reutenauer1993FreeLieAlgebras}, we say that a word $u\in\W$ is a \emph{factor} of a word $w\in\W$ if there exist words $a,b\in\W$ such that $w = a \circ u \circ b$. If $a=\eps$, then $u$ is called a \emph{prefix} of $w$, and if $b=\eps$, then $u$ is called a \emph{suffix} of $w$. A factor $u$ is \emph{proper} if $u\neq w$.

\section{Signature Computation}
In practice, we work with discretely sampled data $\left\{X_j\right\}_{j=0}^M \subset \mathbb{R}^d$ observed on a partition $0=t_0< \cdots<t_M=T$. An interpolation must therefore be chosen to obtain a continuous path on $[0, T]$. We consider the standard piecewise linear interpolation, for which the signature on each subinterval admits a closed-form expression. Combining these per-interval signatures via Chen’s relation then yields an incremental computation of $\sig_{0,T}(X)$ along the partition.
    
\begin{proposition}\label{prop:sig_over_sub}
Let $0=t_0<\cdots<t_M=T$, and assume that $X:[0,T] \to \mathbb{R}^d$ is affine on each subinterval $\left[t_{j-1}, t_j\right]$. Then for every $j=1, \ldots, M$,
$$
\sig_{t_{j-1},t_j}(X)=\exp\!\big(\Delta X_j\big)\in T((\R^d)),
\qquad \Delta X_j:=X_{j}-X_{j-1},
$$
where $\exp(a):=\sum_{n=0}^\infty \frac{a^{\otimes n}}{n!}$ denotes the tensor exponential.
\end{proposition}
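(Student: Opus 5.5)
The plan is to compute each level of the signature on a single subinterval explicitly and compare it with the level-$n$ term of the tensor exponential. Fix $j$ and write $s=t_{j-1}$, $t=t_j$. Since $X$ is affine on $[s,t]$, we have
$$
X_u = X_{j-1} + \frac{u-s}{t-s}\,\Delta X_j, \qquad u\in[s,t],
$$
so $X$ is of bounded variation there with $dX_u = \frac{\Delta X_j}{t-s}\,du$. The Riemann--Stieltjes integrals therefore reduce to ordinary Lebesgue integrals against $du$.

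Substituting into the level-$n$ term of \eqref{eq:tensor_sig} and using multilinearity of the tensor product, the constant vector factors out:
$$
\int_{\Delta^n[s,t]} dX^{\otimes n} = \Bigl(\frac{1}{t-s}\Bigr)^{n} (\Delta X_j)^{\otimes n}\, \int_{\Delta^n[s,t]} du_1\cdots du_n .
$$
The remaining scalar integral is the volume of the ordered simplex, $\mathrm{vol}(\Delta^n[s,t])=(t-s)^n/n!$. This holds because $[s,t]^n$ splits, up to a null set where two coordinates coincide, into $n!$ congruent copies of $\Delta^n[s,t]$, one for each ordering of the coordinates. Alternatively, one can prove it by induction on $n$, integrating out $u_n$ last. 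Hence the level-$n$ component equals $(\Delta X_j)^{\otimes n}/n!$.

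It remains to match this with the grading of the exponential. Since $\Delta X_j\in(\R^d)^{\otimes 1}$, the power $a^{\otimes n}$ with $a=\Delta X_j$ lies purely in level $n$. The series $\exp(a)$ is therefore the element of $T((\R^d))$ whose $n$-th component is $a^{\otimes n}/n!$, with no convergence issues, because each level receives exactly one term. The level-$0$ components agree, both being $1$. Comparing level by level gives the identity in $T((\R^d))$.

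Coordinatewise, the same computation says that for the word $w=(i_1,\dots,i_n)$ the coefficient is $\prod_k \Delta X_j^{(i_k)}/n!$. This is worth noting because it is the form used in the word basis later. There is no real obstacle here. The only step needing care is justifying the simplex volume $(t-s)^n/n!$, which I would handle via the permutation-symmetry argument above. One might also worry about the degenerate case $\Delta X_j=0$, but there both sides equal $1$ trivially.
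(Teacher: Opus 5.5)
Your proposal is correct and follows the paper's proof essentially step for step: substitute $dX_u = \frac{\Delta X_j}{t_j-t_{j-1}}\,du$, pull the constant tensor out by multilinearity, and use the simplex volume $(t_j-t_{j-1})^n/n!$ to identify the level-$n$ component with $(\Delta X_j)^{\otimes n}/n!$. Your justification of the simplex volume and the level-by-level matching are details the paper leaves implicit. The separate treatment of $\Delta X_j=0$ is unnecessary, since the general computation already covers that case.
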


\begin{proof}
Fix $j\in\{1,\dots,M\}$. On $\left[t_{j-1}, t_j\right]$ the path is affine, so for $u \in\left[t_{j-1}, t_j\right]$ we have $d X_u=\left(\Delta X_j /(t_j-t_{j-1})\right) d u$. Thus, for each $n\ge1$,
$$
\sig^{(n)}_{t_{j-1},t_j}(X)
=\Big(\frac{\Delta X_j}{t_j-t_{j-1}}\Big)^{\otimes n}
\int_{t_{j-1}<u_1<\cdots<u_n<t_j} du_1\cdots du_n
=\frac{(\Delta X_j)^{\otimes n}}{n!},
$$
where we used that the simplex $\{t_{j-1}<u_1<\cdots<u_n<t_j\}$ has volume $(t_j-t_{j-1})^n/n!$. With $\sig^{(0)}_{t_{j-1},t_j}(X)=1$, taking the formal sum of $\sig^{(n)}_{t_{j-1},t_j}(X)$ over $n\ge0$ gives the tensor exponential.
\end{proof}

Like the signature, the tensor exponential is an element of $T\left(\left(\mathbb{R}^d\right)\right)$, and we can therefore index its coefficients by words. For $w=(i_1,\dots,i_n)\in \W$,
$$
\exp(\Delta X_j,w):=\frac{1}{n!}\prod_{r=1}^n \Delta X_j^{(i_r)},
\qquad \exp(\Delta X_j,\varepsilon)=1,
$$
and truncation at level $N$ amounts to discarding all terms with $|w|>N$.

With Proposition~\ref{prop:sig_over_sub} providing $\sig_{t_{j-1},t_j}(X)$ in closed form, the signature on $[0, T]$ is obtained by combining the signatures of adjacent subintervals via Chen’s relation.

\begin{theorem}[Chen's relation {\cite[Theorem~7.11]{friz2010multidimensional}}]\label{thm:chen}
Let $X:[0,T]\to\mathbb{R}^d$ be a path of bounded variation. For $0\le s\le u\le t\le T$,
$$
\sig_{s,t}(X)=\sig_{s,u}(X)\otimes \sig_{u,t}(X).
$$
Equivalently, for any word $w\in\W$,
$$
\sig_{s,t}(X,w)=\sum_{w=w_p\circ w_s}\sig_{s,u}(X,w_p)\,\sig_{u,t}(X,w_s),
$$
where the sum is over all decompositions of $w$ into a prefix word $w_p$ and a suffix word $w_s$.
\end{theorem}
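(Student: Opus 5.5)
The plan is to prove the coefficientwise identity first and then read off the tensor form. Since $A\otimes B$ is defined by the Cauchy product, its coefficient on $e_w$ is $\sum_{w=w_p\circ w_s} A(w_p)B(w_s)$. So the two formulations in the statement are equivalent, and it suffices to fix a word $w=(i_1,\dots,i_n)$ and show
$$
\sig_{s,t}(X,w)=\sum_{k=0}^{n}\sig_{s,u}(X,(i_1,\dots,i_k))\,\sig_{u,t}(X,(i_{k+1},\dots,i_n)).
$$
The case $n=0$ is trivial because both sides equal $1$. So assume $n\ge1$.

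The main step is a decomposition of the simplex $\Delta^n[s,t]$. For each point $s<u_1<\cdots<u_n<t$, there is a unique $k\in\{0,\dots,n\}$ with $u_1<\cdots<u_k<u\le u_{k+1}<\cdots<u_n$. Hence $\Delta^n[s,t]$ is, up to the sets where some $u_j=u$, the disjoint union over $k$ of the products $\Delta^k[s,u]\times\Delta^{n-k}[u,t]$. On each piece the integrand $dX^{(i_1)}_{u_1}\cdots dX^{(i_n)}_{u_n}$ factorises. Fubini's theorem for the product of the finite signed Lebesgue--Stieltjes measures $dX^{(i_r)}$ then turns the integral over the $k$-th piece into $\sig_{s,u}(X,w_p)\,\sig_{u,t}(X,w_s)$, with $w_p=(i_1,\dots,i_k)$ and $w_s=(i_{k+1},\dots,i_n)$. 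Summing over $k$ gives the identity.

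The expected obstacle is justifying that the boundary sets $\{u_j=u\}$, and the diagonals $\{u_j=u_{j+1}\}$, carry no mass. This holds when $X$ is continuous: a continuous bounded-variation function induces a measure $dX^{(i)}$ with no atoms, so every hyperplane $\{u_j=c\}$ is null for the product measure. Because the paper treats continuous paths, this is the assumption I would state explicitly.

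As a fallback, I would argue by induction on $n$ using the recursive definition
$$
\sig_{s,t}(X,w)=\int_s^t \sig_{s,r}(X,w')\,dX^{(i_n)}_r,\qquad w=w'\circ(i_n).
$$
Split the outer integral at $u$. On $[u,t]$, apply the induction hypothesis to $\sig_{s,r}(X,w')$ with $s\le u\le r$. Then use linearity and the same recursion on $[u,t]$ to regroup the terms into the sum over prefix/suffix splittings. This avoids measure-theoretic boundary issues, needing only additivity of the Riemann--Stieltjes integral over adjacent intervals.
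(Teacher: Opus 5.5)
The paper gives no proof of this statement; it quotes Chen's relation from Friz--Victoir. So there is no in-paper argument to compare against, and your proposal supplies a correct, self-contained one. Both of your routes work.

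In the simplex route, the cell $\{u_k<u\le u_{k+1}\}$ differs from $\{s<u_1<\cdots<u_k<u\}\times\{u<u_{k+1}<\cdots<u_n<t\}$ only inside the face $\{u_{k+1}=u\}$. Since the paper's simplices are strict, the diagonals you worried about never enter. The only input beyond Fubini is that $dX^{(i)}$ has no atoms.

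Your continuity assumption is genuinely needed, not cosmetic. If $X$ jumped at $u$, then already at level one $\sig_{s,u}(X,(i))+\sig_{u,t}(X,(i))$ would lose the jump (open intervals) or count it twice (closed intervals). The assumption is implicit in the paper's Riemann--Stieltjes reading and in the continuous bounded-variation setting of the cited source.

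Your fallback induction on the last letter is the coordinate form of the ODE argument common in rough-path texts. Both $r\mapsto\sig_{s,r}(X)$ and $r\mapsto\sig_{s,u}(X)\otimes\sig_{u,r}(X)$ solve $dZ_r=Z_r\otimes dX_r$ on $[u,t]$ with the same value at $r=u$. Because this linear system is strictly triangular in word length, uniqueness is exactly your induction, with no Gr\"onwall step.

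The two routes buy different things. The induction needs only linearity and interval-additivity of the Riemann--Stieltjes integral, and it matches the prefix-driven structure the paper exploits algorithmically. The simplex route makes the prefix--suffix sum geometrically visible, with each split $w=w_p\circ w_s$ being one cell, at the cost of product-measure machinery.
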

This provides a practical method for computing the signature along the partition. In particular, for a piecewise linear path, we have
\begin{equation}
\sig_{0,t_j}(X)=\sig_{0,t_{j-1}}(X)\otimes \exp(\Delta X_j),
\qquad j=1,\ldots,M,
\end{equation}
with the initial condition $\sig_{0,0}(X)=1$. Equivalently, for each word $w\in\W$,
\begin{equation}\label{eq:rec_chen}
\sig_{0,t_j}(X,w)=\sum_{w=w_p\circ w_s}\sig_{0,t_{j-1}}(X,w_p)\,\exp(\Delta X_j,w_s),
\qquad j=1,\ldots,M.
\end{equation}

This iterative formula reduces the computation of the signature to a sequence of tensor multiplications and computations of tensor exponentials. Established libraries like \texttt{iisignature} \cite{iisig}, \texttt{esig} \cite{esig2023}, and \texttt{Signatory} \cite{kidger2020signatory} implement this recursion in C++ backends, offloading tensor operations to HPC (High-Performance Computing) frameworks such as BLAS, Eigen, or PyTorch's ATen for CPU/GPU acceleration. A notable recent addition is \texttt{keras\_sig} \cite{genet2025keras}, which extends multi-backend support (PyTorch, TensorFlow, JAX) in Keras 3, reframing signature computations as parallel matrix multiplications and cumulative products for GPU acceleration, reportedly achieving up to 10-20x speedups in forward passes and up to 55$\%$ faster training times. A further high-performance library is \texttt{pySigLib} \cite{shmelev2025pysiglib}, which implements both the direct recursion of \texttt{iisignature} and the Horner-type scheme used in \texttt{Signatory}, with additional memory-layout and hardware-level optimisations.

\subsection{Computation over Prefix-Closed Sets of Words}
Existing methods for computing the signature organise around operations in the tensor algebra on the graded sequence of tensors
$$\bigl(1, \sig^{(1)}_{0,T}(X), \sig^{(2)}_{0,T}(X), \ldots\bigr).$$
Although such tensor-algebra operations can be accelerated by highly optimised linear-algebra backends, they introduce an additional layer of abstraction, both computationally and mathematically.

Instead, we work directly in the canonical word basis of $T\left(\left(\mathbb{R}^d\right)\right)$. By exploiting the dependency structure in Chen's relation, we decompose the word basis into computational units within which the corresponding signature coefficients can be updated independently. In particular, the dependence on signature terms in \eqref{eq:rec_chen} is solely on the word $w$ itself and all of its proper prefixes. The tensor exponential terms, by contrast, have no time dependence. This motivates decomposing the word basis into sets of words that are prefix-closed.

\begin{definition}[Prefix-closed]
We say that a set of words $W \subseteq \W$ is prefix-closed if for every word $w \in W$, all prefixes of $w$ are also in $W$. That is,
$$
w \in W \implies w_{[k]} \in W \quad \forall k \in \{0, 1, \ldots, |w|\},
$$
where $w_{[k]}:=\left(i_1, \ldots, i_k\right)$ for $k>0$ and $ w_{[0]}:=\eps$.
\end{definition}

Over a prefix-closed set of words $W \subseteq \mathcal{W}$, the coefficients $\left\{\sig_{0, T}(X, w)\right\}_{w \in W}$ are computed by iterating \eqref{eq:rec_chen}. For $w=\left(i_1, \ldots, i_n\right) \in W$, at each step $j=1, \ldots, M$ we evaluate the prefix--suffix sum using Horner's method by computing
$$
h
=\Delta X_j^{(i_n)}\Bigg(
    \sig_{0,t_{j-1}}(X,w_{[n-1]})
    +\frac{\Delta X_j^{(i_{n-1})}}{2}\Big(
        \sig_{0,t_{j-1}}(X,w_{[n-2]})
        +\cdots
        +\frac{\Delta X_j^{(i_1)}}{n}\,\sig_{0,t_{j-1}}(X,w_{[0]})
    \Big)
\Bigg),
$$
and update $\sig_{0,t_j}(X,w)=\sig_{0,t_{j-1}}(X,w)+h$. This is equivalent to \eqref{eq:rec_chen} but avoids explicitly forming the coefficients of $\exp(\Delta X_j)$. In practice, this reduces the number of intermediate floating-point operations and tends to be less sensitive to rounding error at higher truncation levels. Algorithm~\ref{alg:chen_update} summarises the resulting procedure.

\begin{algorithm}[H]
\caption{Signature update via Chen's relation and Horner's method for $w=(i_1,\dots,i_n)$}
\label{alg:chen_update}
\begin{algorithmic}[1]
\Require $\sig_{0,t_{j-1}}(X,w_{[k]})$ for $k=0, 1,\ldots,n$; \, $\Delta X_j^{(i)}$ for all $i \in \{i_1, \dots, i_n\}$.
\Ensure $\sig_{0,t_j}(X,w)$.
\State $h \gets 0$ \Comment{accumulator}
\For{$k = 0,1,\ldots,n-1$} \Comment{build from inner-most to outer-most}
    \State $h \gets \dfrac{\Delta X_j^{(i_{k+1})}}{\,n-k\,}\Big(\sig_{0,t_{j-1}}(X,w_{[k]}) + h\Big)$.
\EndFor
\State $\sig_{0,t_j}(X,w) \gets \sig_{0,t_{j-1}}(X,w) + h$.
\end{algorithmic}
\end{algorithm}
\FloatBarrier

\subsection{Choice of Prefix-Closed Sets}
To compute the truncated signature, we must cover $\mathcal{W}_{\leq N}$ by a collection of prefix-closed sets $\left(W_i\right)_{i=1}^I$, and assign each $W_i$ to a computational unit. In a CUDA implementation this corresponds to choosing a granularity; for instance, one $W_i$ per thread, warp, or block\footnote{In CUDA, a thread is the basic execution unit. Threads are organized into warps (32 threads executing in lockstep), and a thread block consists of one or more warps, providing a set of threads that can synchronize and share on-chip shared memory.}, and having each unit apply Algorithm~\ref{alg:chen_update} to all $w\in W_i$ at each update $t_{j-1}\to t_j$.

Among the assignments we considered, the highest performance was obtained by assigning each thread the prefix-closed set generated by a single word $w$.

\begin{definition}
Let $w = (i_1, \dots, i_n) \in \W_n$. The set of all prefixes of $w$ is
$$
P_w:=\{u \in \W \mid \exists v \in \W,\ w=u \circ v\}.
$$
\end{definition}
Although this choice incurs $O(n^2)$ work per thread at each step and introduces redundancy, its improved memory locality and uniform control flow led it to outperform coarser assignments at the warp or block level, which required more complex work partitioning, launch logic, and mapping between words. Moreover, it allows for computing the signature coefficients of any non-empty word set $\mathcal{I} \subset \mathcal{W}$, enabling projections beyond truncation.
 
\FloatBarrier

\subsection{Log Signatures}\label{subsec:logsigs3}
Alongside signature computation, \texttt{pathsig} supports log-signature computation in the standard Lyndon basis. Recall that the (truncated) signature is \emph{group-like} \cite{friz2010multidimensional}. The set of group-like elements forms a step-$N$ nilpotent Lie group $G_{\le N}$ whose Lie algebra $\mathfrak g_{\le N}$ consists of primitive elements \cite{Reutenauer1993FreeLieAlgebras}. On $G_{\le N}$, the tensor exponential and logarithm are mutually inverse, giving a bijection
$$
\exp:\mathfrak g_{\le N}\to G_{\le N},\qquad \log:G_{\le N}\to \mathfrak g_{\le N},
$$
and hence a one-to-one correspondence between the signature and its log-signature. Since $\dim(\mathfrak g_{\le N}) \leq \dim(T_{\le N}(\R^d))$, the log-signature provides a compressed coordinate system.

Although the log-signature contains the same information as the signature, universal approximation results for signatures do not carry over to the log-signature. This is consistent with empirical findings that signatures often outperform log-signatures \cite{morrill2020generalised}. Nonetheless, log-signatures remain useful in a range of applications, owing to their lower-dimensional, information-dense representation \cite{liao2021logsigrnn,patil2024logsig}. 

For log-signature computations, \texttt{pathsig} supports the computationally efficient Lie basis introduced in \cite{kidger2020signatory}, which is naturally indexed by Lyndon words.  We compute this by taking the tensor logarithm of the signature. With support for alternative projections (Section~\ref{sec:projections}), we can compute and differentiate these log-signature coordinates without materialising all signature coefficients up to the target depth $N$. Instead, we compute all coefficients up to level $N-1$, and at level $N$ only the required subset. 
Since the number of coefficients at level $n$ grows like $d^n$, the highest level typically dominates runtime, so this yields substantial savings.

In principle, one could convert the log-signature to a different, more classical Lie basis, such as a Hall basis or a Lyndon bracket basis. A straight forward way to do this is to compute the log-signature in the expanded tensor basis and then apply a precomputed linear map to obtain the corresponding Lie coordinates. However, this is computationally expensive and introduces additional numerical error. For most machine-learning applications, this extra work is unnecessary, since different bases of the same Lie space are related by an invertible linear transformation and downstream linear layers can learn an equivalent change of coordinates. Moreover, for such Lie bases it is often preferable to compute log-signatures directly in that basis, for which more mathematically oriented libraries such as \texttt{RoughPy} \cite{morley2024roughpy} provide good support.

\section{Backpropagation through the Signature}
\label{sec:backprop}
When the signature is used as a trainable component of a machine-learning model~\cite{deepsig2019, sigtrans} rather than a fixed feature map, efficient backpropagation through the signature is essential for gradient-based training. In either case, a signature-based model can be written as
$$
X \mapsto f_\theta(\sig_{0,T}(X)),
$$
where $f_\theta$ represents the downstream part of the model and $X$ is the input path, which may take the form $X= \phi_\theta(Y)$ for a learnable map $\phi_\theta$ applied to an underlying path $Y$.

In modern machine learning frameworks, such as PyTorch, layers are self-contained modules that can be composed into larger models by forming a computation graph. During backpropagation, each layer in the computational graph receives gradients with respect to its outputs and must compute gradients with respect to its inputs. For the signature layer, the outputs are the signature coordinates
$$
\bigl\{\sig_{0,T}(X,w)\bigr\}_{w\in\mathcal I}
$$
indexed by a finite set of words $\mathcal I \subset \W$; e.g. $\mathcal I=\W_{\le N}$ for truncation at depth $N$. Accordingly, in the backward pass the signature layer receives the gradients
$$
\left\{\frac{\partial \loss}{\partial \sig_{0,T}(X,w)}\right\}_{w\in\mathcal I}
$$
from the subsequent part of the model and must produce the gradients with respect to the input path,
\begin{equation}\label{eq:path_grads}
\left\{\frac{\partial \loss}{\partial X_{j}^{(i)}}\right\}_{(i,j)\in \mathcal{A}_d \times \{0,1,\dots,M\}}.
\end{equation}
Here $X_{j}^{(i)}$ denotes the $i$-th coordinate of $X$ at time $t_j$, and $\loss=\mathcal{L}\left(f_\theta\left(\sig_{0,T}(X)\right), y\right)$ is the loss to be minimised.

\subsection{Gradients with Respect to Path Increments}
Since $\sig_{0,T}(X)$ depends on the path only through the increments $\Delta X_j=X_j-X_{j-1}$, we first compute the gradients $\left\{\partial \loss / \partial \Delta X_j\right\}_{j=1}^M$. The gradients with respect to the sampled path values in \eqref{eq:path_grads} then follow directly from the chain rule.

\begin{proposition}\label{thm:grad_decomp_formula}
Let $\mathcal{I} \subseteq \W$ be a non-empty set of words over the alphabet $\mathcal{A}_d$. For each pair $(i,j)\in \mathcal{A}_d \times \{1,\dots,M\}$,
$$
\frac{\partial \loss}{\partial \Delta X_j^{(i)}} =
\sum_{w \in \mathcal{I}} \frac{\partial \loss}{\partial \sig_{0,T}(X, w)} \sum_{\substack{u, v \in \W \\ u \circ v = w}} \frac{\partial \sig_{0,t_j}(X, u)}{\partial \Delta X_j^{(i)}} \, \sig_{t_j,T}(X, v).
$$
\end{proposition}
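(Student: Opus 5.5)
The plan is to use the chain rule at the level of the signature coefficients and then split the signature at time $t_j$ with Chen's relation (Theorem~\ref{thm:chen}). Since the loss depends on $X$ only through the outputs $\{\sig_{0,T}(X,w)\}_{w\in\mathcal I}$, the multivariate chain rule gives
$$
\frac{\partial \loss}{\partial \Delta X_j^{(i)}}=\sum_{w\in\mathcal I}\frac{\partial \loss}{\partial \sig_{0,T}(X,w)}\,\frac{\partial \sig_{0,T}(X,w)}{\partial \Delta X_j^{(i)}} .
$$
Here the increments $\Delta X_1,\dots,\Delta X_M$ are treated as independent variables, with $\sig_{0,T}(X,w)$ a polynomial in them. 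The sum is finite when $\mathcal I$ is finite, as in the layer setting. It therefore suffices to show, for each fixed word $w$,
$$
\frac{\partial \sig_{0,T}(X,w)}{\partial \Delta X_j^{(i)}}=\sum_{u\circ v=w}\frac{\partial \sig_{0,t_j}(X,u)}{\partial \Delta X_j^{(i)}}\,\sig_{t_j,T}(X,v).
$$

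For this, apply the word form of Chen's relation with $s=0$, $u=t_j$, $t=T$:
$$
\sig_{0,T}(X,w)=\sum_{u\circ v=w}\sig_{0,t_j}(X,u)\,\sig_{t_j,T}(X,v).
$$
The sum is finite, with $|w|+1$ terms. The key observation is that $\sig_{t_j,T}(X,v)$ does not depend on $\Delta X_j$. By iterating Chen's relation together with Proposition~\ref{prop:sig_over_sub},
$$
\sig_{t_j,T}(X)=\exp(\Delta X_{j+1})\otimes\cdots\otimes\exp(\Delta X_M),
$$
so its coefficients are polynomials in $\Delta X_{j+1},\dots,\Delta X_M$ only. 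Similarly, $\sig_{0,t_j}(X,u)$ is a polynomial in $\Delta X_1,\dots,\Delta X_j$.

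Differentiating the Chen expansion term by term with the product rule, the derivative of each second factor vanishes, and this yields the displayed identity. Substituting it into the chain-rule sum gives the statement.

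The only point needing care is setting up the right notion of dependence. We must view the signature as a function of the increments rather than of the path values, and justify that $\sig_{t_j,T}$ is independent of $\Delta X_j$. I expect this to be the main (mild) obstacle, and I would settle it via the explicit product-of-exponentials representation above. For the boundary case $j=M$, we have $\sig_{t_M,T}=1$, so only $v=\eps$ contributes and the formula reduces to the plain chain rule.
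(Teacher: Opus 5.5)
Your proposal is correct and follows essentially the same argument as the paper: the chain rule over $w\in\mathcal I$, then Chen's relation split at $t_j$, with the observation that $\Delta X_j$ enters only through $\sig_{0,t_j}(X)$, so differentiation acts only on the prefix factor. Your product-of-exponentials argument that $\sig_{t_j,T}(X)$ depends only on $\Delta X_{j+1},\dots,\Delta X_M$ spells out explicitly the independence step that the paper states more informally.
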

\begin{proof}
By the chain rule,
$$
\frac{\partial \loss}{\partial \Delta X_j^{(i)}} \;=\; \sum_{w \in \mathcal{I}}
\frac{\partial \loss}{\partial \sig_{0,T}(X, w)} \,
\frac{\partial \sig_{0,T}(X, w)}{\partial \Delta X_j^{(i)}}.
$$
We can isolate the dependence on $\Delta X_j$ in $\sig_{0,T}(X)$ using Chen's relation,
$$
\sig_{0,T}(X)
= \sig_{0,t_{j-1}}(X)\otimes \exp(\Delta X_j)\otimes \sig_{t_j,T}(X),
$$
where the path increment $\Delta X_j$ enters through
$\sig_{0,t_j}(X) = \sig_{0,t_{j-1}}(X) \otimes \exp(\Delta X_j)$.
Consequently, differentiation with respect to $\Delta X_j^{(i)}$ only acts on $\sig_{0,t_j}(X)$ in
$$
\sig_{0,T}(X)
= \sig_{0,t_{j}}(X)\otimes \sig_{t_j,T}(X).
$$
Projection onto the word set $\mathcal{I}$ gives the stated expression.
\end{proof}
A primary challenge is to evaluate this without storing the intermediate signatures $\sig_{0,t_j}(X)$ or $\sig_{t_j,T}(X)$ for all $j=1, \ldots, M-1$. Retaining all of these intermediate signatures leads to a memory cost that scales linearly with the number of time steps, which quickly becomes prohibitive. Instead, we reconstruct the required terms during the backward pass.

\subsection{Reconstructing Intermediate Signature Values}
To reconstruct the signature values required in the backward pass while storing only the terminal signature $\sig_{0,T}(X)$ from the forward pass, we rely on the algebraic structure of the signature. Namely, that the signature is \emph{group-like} and hence invertible with respect to $\otimes$. This allows us to express the required $\sig_{0,t_j}(X, \cdot)$ and $\sig_{t_j,T}(X, \cdot)$ values in terms of a minimal set of computable values.

\begin{proposition}\label{prop:rev_end}
For each $j \in \{0, \ldots, M-1\}$ and each $w \in \W$,
$$
\sig_{t_j, T}(X, w) = \sum_{w = w_p \circ w_s} \exp(\Delta X_{j+1}, w_p) \, \sig_{t_{j+1}, T}(X, w_s).
$$
\end{proposition}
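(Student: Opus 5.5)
This identity is a direct consequence of Chen's relation (Theorem~\ref{thm:chen}) combined with Proposition~\ref{prop:sig_over_sub}. The plan is to split the interval $[t_j,T]$ at the intermediate point $t_{j+1}$ and read off coefficients in the word basis.

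First, fix $j\in\{0,\ldots,M-1\}$. We have $t_j\le t_{j+1}\le T$, and $X$ is of bounded variation on $[t_j,T]$. Applying Theorem~\ref{thm:chen} with $s=t_j$, $u=t_{j+1}$, $t=T$ gives
$$
\sig_{t_j,T}(X)=\sig_{t_j,t_{j+1}}(X)\otimes \sig_{t_{j+1},T}(X).
$$

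Second, since $X$ is affine on $[t_j,t_{j+1}]$, Proposition~\ref{prop:sig_over_sub} (with index $j+1$) identifies the first factor as $\sig_{t_j,t_{j+1}}(X)=\exp(\Delta X_{j+1})$. Hence
$$
\sig_{t_j,T}(X)=\exp(\Delta X_{j+1})\otimes\sig_{t_{j+1},T}(X).
$$

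Third, we extract the coefficient of $e_w$. Write both factors in the word basis. The Cauchy product in $T((\R^d))$ satisfies $e_u\otimes e_v=e_{u\circ v}$, so the coefficient of $e_w$ in the product is the sum over all decompositions $w=w_p\circ w_s$ of the product of the corresponding coefficients. This is exactly the coordinate form already recorded in Theorem~\ref{thm:chen}. The word-indexed coefficients of $\exp(\Delta X_{j+1})$ are the $\exp(\Delta X_{j+1},w_p)$ defined after Proposition~\ref{prop:sig_over_sub}, so the stated formula follows.

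The only point requiring care is the edge case $j=M-1$, where $t_{j+1}=T$. There $\sig_{T,T}(X)=1$, meaning only the empty word has a nonzero coefficient. The formula then reduces to $\sig_{t_{M-1},T}(X,w)=\exp(\Delta X_M,w)$, which is consistent with Proposition~\ref{prop:sig_over_sub}, and Chen's relation allows $u=t$, so this case needs no separate argument. I do not expect a genuine obstacle: the proof is bookkeeping of the word-level form of Chen's relation, with Proposition~\ref{prop:sig_over_sub} supplying the factor $\exp(\Delta X_{j+1})$.
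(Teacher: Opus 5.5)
Your proof is correct and matches the paper's argument: split $[t_j,T]$ at $t_{j+1}$ via Chen's relation, identify $\sig_{t_j,t_{j+1}}(X)=\exp(\Delta X_{j+1})$ by Proposition~\ref{prop:sig_over_sub}, and read off the coefficient of $e_w$. Your check of the case $j=M-1$ (where $\sig_{T,T}(X)=1$) is a harmless extra step that the paper leaves implicit.
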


\begin{proof}
This follows from Chen's relation:
$\sig_{t_j, T}(X) = \sig_{t_j, t_{j+1}}(X) \otimes \sig_{t_{j+1}, T}(X)$,
and that $\sig_{t_j,t_{j+1}}(X)=\exp(\Delta X_{j+1})$ by Proposition~\ref{prop:sig_over_sub}.
\end{proof}

Just as the signature can be computed over prefix-closed sets of words forward in time, this can be computed over suffix-closed sets of words backward in time. 

\begin{definition}
Let $w \in \W$. The set of all suffixes of $w$ is 
$$
S_w := \{ s \in \W \mid \exists u \in \W,\, w = u \circ s \}.
$$
\end{definition}

For the $\sig_{0,t_j}(X, \cdot)$ terms, we use a reversal property of the signature that allows us to compute it backward in time over prefix-closed sets of words. We recall the usual notion of a time-reversed path.

\begin{definition}
The \emph{time-reversed path} over $[0,t]$, denoted $\overleftarrow{X}$, is the path parameterised by $s \mapsto X_{t-s}$ for $s \in [0,t]$.
\end{definition}

\begin{lemma}
\label{lem:signature-inverse}
The inverse of $\sig_{0,t}(X)$ with respect to $\otimes$ is the signature of the time-reversed path:
$$
\sig_{0,t}(X)^{-1} = \sig_{0,t}(\overleftarrow{X}).
$$
\end{lemma}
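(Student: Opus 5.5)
The plan is to show that $\sig_{0,t}(X)\otimes\sig_{0,t}(\overleftarrow{X})=1$, using Chen's relation (Theorem~\ref{thm:chen}) on the concatenation of $X$ with its reversal. Since $T((\R^d))$ is an associative algebra and any element with scalar part $1$ has a two-sided inverse (given by the formal geometric series $\sum_{k\ge0}(1-a)^{\otimes k}$, well defined levelwise), a one-sided inverse of $\sig_{0,t}(X)$ is automatically the two-sided inverse. So it suffices to prove this single identity.

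First, define the concatenated path $Z:[0,2t]\to\R^d$ by $Z_u=X_u$ for $u\le t$ and $Z_u=X_{2t-u}$ for $u\in[t,2t]$. This is continuous and of bounded variation. On $[t,2t]$ it is a translate in time of $\overleftarrow{X}$, and iterated integrals are invariant under time shifts, so $\sig_{t,2t}(Z)=\sig_{0,t}(\overleftarrow{X})$. Chen's relation then gives
$$
\sig_{0,2t}(Z)=\sig_{0,t}(X)\otimes\sig_{0,t}(\overleftarrow{X}).
$$

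The main obstacle is showing that $\sig_{0,2t}(Z)=1$, i.e.\ that a path followed by its own retracing has trivial signature (tree-like cancellation). I would first handle the case where $X$ is piecewise linear on a partition $0=s_0<\dots<s_K=t$. By Proposition~\ref{prop:sig_over_sub} and Chen's relation,
$$
\sig_{0,t}(X)=\exp(\Delta_1)\otimes\cdots\otimes\exp(\Delta_K),
$$
and $\overleftarrow{X}$ is piecewise linear with increments $-\Delta_K,\dots,-\Delta_1$. Hence
$$
\sig_{0,t}(\overleftarrow{X})=\exp(-\Delta_K)\otimes\cdots\otimes\exp(-\Delta_1).
$$
Since $a$ commutes with itself, $\exp(a)\otimes\exp(-a)=1$ by the Cauchy product and the binomial identity $\sum_{k}\binom{n}{k}(-1)^{n-k}=0$ for $n\ge1$. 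The product then telescopes to $1$.

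For a general bounded-variation $X$, I would approximate it by its piecewise linear interpolations $X^{(m)}$ on refining partitions. Each fixed-level iterated integral is continuous under this approximation: the $n$-th level converges because the $X^{(m)}$ have uniformly bounded variation and converge uniformly, which gives convergence of the Riemann--Stieltjes iterated integrals. The reversed interpolations are the interpolations of the reversed path, so both factors converge levelwise. Since the product at level $n$ involves only finitely many lower levels, the identity passes to the limit level by level, giving $\sig_{0,t}(X)\otimes\sig_{0,t}(\overleftarrow{X})=1$, and by the first paragraph the lemma follows. Alternatively, in the bounded-variation setting one could check the level-$n$ identity directly by the substitution $u\mapsto 2t-u$ and an induction via $\sig_{0,t}^{(n)}=\int_0^t \sig^{(n-1)}_{0,u}\otimes dX_u$; I would fall back on this if the continuity step proved awkward to state cleanly.
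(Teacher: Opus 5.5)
Your argument is correct and has the same skeleton as the paper's: show $\sig_{0,t}(X)\otimes\sig_{0,t}(\overleftarrow{X})=1$, then conclude that this is the inverse. The difference is that the paper proves neither step. It cites the identity from \cite[Proposition~7.12]{friz2010multidimensional} and invokes the group-like property to pass from a one-sided identity to the inverse. You instead derive the identity from results already in the paper, Proposition~\ref{prop:sig_over_sub} and Theorem~\ref{thm:chen}. For piecewise linear paths you use the exact telescoping $\exp(\Delta_1)\otimes\cdots\otimes\exp(\Delta_K)\otimes\exp(-\Delta_K)\otimes\cdots\otimes\exp(-\Delta_1)=1$, and for general paths a levelwise limit. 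You also replace group-likeness by the more elementary fact that every element of $T((\R^d))$ with scalar part $1$ is invertible.

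What your route buys is self-containedness. It also shows that the only instance used later, $\exp(\Delta X_{j+1})^{-1}=\exp(-\Delta X_{j+1})$ in Proposition~\ref{thm:rev_word_sig}, is pure algebra and needs no approximation.

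The price is the continuity step, which you assert rather than prove. It does hold, for example by induction on the level. Set $Y^m=\sig^{(n-1)}_{0,\cdot}(X^{(m)})$ and $Y=\sig^{(n-1)}_{0,\cdot}(X)$, and write $\int Y^m\otimes dX^{(m)}-\int Y\otimes dX=\int (Y^m-Y)\otimes dX^{(m)}+\int Y\otimes d(X^{(m)}-X)$. The first term is bounded by $\|Y^m-Y\|_\infty$ times the variation of $X^{(m)}$, which is at most that of $X$. The second term, after integrating by parts, is bounded by a constant times $\|X^{(m)}-X\|_\infty$. The paper's citation covers this step, and more general rough-path settings, without any work on the page.

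Finally, the auxiliary path $Z$ is harmless but unnecessary, since your computation shows directly that the product equals $1$.
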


\begin{proof}
Since $\sig_{0,t}(X) \otimes \sig_{0,t}(\overleftarrow{X}) = 1$, see for example \cite[Proposition 7.12]{friz2010multidimensional}, it follows from the group-like property of the signature that $\sig_{0,t}(X)^{-1} = \sig_{0,t}(\overleftarrow{X}).$
\end{proof}

\begin{proposition}\label{thm:rev_word_sig}
For each $j \in \{0, \ldots, M-1\}$ and each $w \in \W$,
$$
\sig_{0, t_j}(X, w) = \sum_{w = w_p \circ w_s} \sig_{0, t_{j+1}}(X, w_p) \, \exp(-\Delta X_{j+1}, w_s).
$$
\end{proposition}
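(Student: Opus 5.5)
The identity follows from Chen's relation by right-multiplying with the inverse of the last increment's signature. By Theorem~\ref{thm:chen} and Proposition~\ref{prop:sig_over_sub},
$$
\sig_{0,t_{j+1}}(X)=\sig_{0,t_j}(X)\otimes \exp(\Delta X_{j+1}).
$$
We tensor-multiply both sides on the right by $\exp(-\Delta X_{j+1})$. The key fact needed is that $\exp(\Delta X_{j+1})\otimes\exp(-\Delta X_{j+1})=1$ in $T((\R^d))$. There are two ways to see this. One can invoke Lemma~\ref{lem:signature-inverse} on the subinterval $[t_j,t_{j+1}]$: the time-reversed path is affine with increment $-\Delta X_{j+1}$, so by Proposition~\ref{prop:sig_over_sub} its signature is $\exp(-\Delta X_{j+1})$. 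Alternatively, and more directly, $a=\Delta X_{j+1}$ and $-a$ commute in the tensor algebra, so the usual binomial/Cauchy product argument for formal power series in one commuting variable gives $\exp(a)\otimes\exp(-a)=\exp(0)=1$. In that argument the level-$n$ component is $\sum_{k}\frac{a^{\otimes k}\otimes(-a)^{\otimes(n-k)}}{k!(n-k)!}=\frac{(a-a)^{\otimes n}}{n!}$, which is $0$ for $n\ge1$.

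Using associativity of $\otimes$ on $T((\R^d))$ (it is a graded Cauchy product, so each level involves only finitely many terms), this gives
$$
\sig_{0,t_j}(X)=\sig_{0,t_{j+1}}(X)\otimes \exp(-\Delta X_{j+1}).
$$
Finally, we read off the coefficient of $e_w$. Since $e_u\otimes e_v=e_{u\circ v}$, the coefficient of $e_w$ in a product $A\otimes B$ is $\sum_{w=w_p\circ w_s}A(w_p)B(w_s)$, exactly as in the word form of Theorem~\ref{thm:chen}. This yields the stated formula, with $\exp(-\Delta X_{j+1},w_s)=\frac{(-1)^{|w_s|}}{|w_s|!}\prod_r \Delta X_{j+1}^{(i_r)}$.

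The only step with real content is the inverse identity for the exponential. Choosing the commuting-variable argument makes it self-contained and avoids any subtlety about the reversed-path reparametrisation in Lemma~\ref{lem:signature-inverse}, so the main obstacle is minor. The rest is bookkeeping of levels, which is finite at each word length.
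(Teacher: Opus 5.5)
Your proposal is correct and follows the paper's proof: you write $\sig_{0,t_{j+1}}(X)=\sig_{0,t_j}(X)\otimes\exp(\Delta X_{j+1})$ by Chen's relation, right-multiply by the inverse $\exp(-\Delta X_{j+1})$, and read off the coefficient of $e_w$. The only difference is that you justify $\exp(\Delta X_{j+1})^{-1}=\exp(-\Delta X_{j+1})$ with the binomial identity for commuting elements instead of relying only on Lemma~\ref{lem:signature-inverse}. This is a self-contained, slightly more elementary argument for the same step.
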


\begin{proof}
Since $\sig_{t_j, t_{j+1}}(X) = \exp(\Delta X_{j+1})$, the inverse is $\exp(-\Delta X_{j+1})$ according to Lemma~\ref{lem:signature-inverse}. Thus, $\sig_{0, t_j}(X) = \sig_{0, t_{j+1}}(X) \otimes \exp(-\Delta X_{j+1})$.
\end{proof}

Propositions~\ref{prop:rev_end} and \ref{thm:rev_word_sig} show that, for a fixed word $w$, its contribution to the path gradients in \eqref{eq:path_grads} depends only on signature coordinates indexed by prefixes and suffixes of $w$. The corresponding values,
$$
\{\sig_{0,t_j}(X,u)\}_{u\in P_w}
\qquad\text{and}\qquad
\{\sig_{t_j,T}(X,v)\}_{v\in S_w},
$$
can be reconstructed via backward recursion in $j$, with terminal conditions $\sig_{0,t_M}(X)=\sig_{0,T}(X)$ and $\sig_{t_M,T}(X)=1$. Consequently, backpropagation can be carried out over arbitrary word sets $\mathcal{I}\subset\W$, while storing only the terminal signature $\sig_{0,T}(X)$.

\FloatBarrier

\section{Signatures over Windows}\label{sec:windowed-signatures}
It is often advantageous to represent a path not by a single global signature $\mathbf{S}_{0, T}(X)$, but by a collection of signatures over multiple subintervals,
$$
\bigl(\mathbf{S}_{a,b}(X)\bigr)_{(a,b)\in I},
\qquad
I \subseteq \{(a,b)\in[0,T]^2 : a < b \}.
$$
Such a collection can be interpreted from a time-series perspective by choosing a collection of windows $I$ and treating the resulting signatures as a structured feature set, as in \cite{morrill2020generalised}. Another viewpoint, introduced in \cite{cuchiero2023signature}, is to treat signatures (or linear functionals thereof) as stochastic processes, for example
$t\mapsto \ell(\mathbf{S}_{0,t}(X))$, corresponding to expanding windows, or $t\mapsto \ell(\mathbf{S}_{t-h,t}(X))$ for a fixed horizon $h$, corresponding to sliding windows.

Signature libraries often provide limited native support for such computations, typically only for expanding windows with unit stride, and otherwise requiring a separate evaluation per window. In contrast, \texttt{pathsig} supports general, user-specified windows. Concretely, suppose $X$ is observed at times $0=t_0<\cdots<t_M=T$, and let $(\ell_i,r_i)_{i=1}^K$ be index pairs with $\ell_i<r_i$ specifying windows $[t_{\ell_i},t_{r_i})$.
Given the resulting $K\times 2$ integer tensor of indices, \texttt{pathsig} returns the collection
$$
\bigl(\mathbf{S}_{t_{\ell_i},t_{r_i}}(X)\bigr)_{i=1}^K
$$
in a single evaluation.

This reduces the fixed overheads that often dominate windowed signature computations. Since \texttt{pathsig} evaluates an entire collection of windows in a single call, we incur essentially the same fixed overhead as for a single signature computation. Moreover, windowing adds a further axis of parallelism beyond the usual $(\text{paths},\text{words})$. For small batch sizes and moderate depths, parallelising only over the paths in the batch and words often leaves the GPU under-utilised. Introducing a window dimension increases the total number of independent signature computations, so the CUDA kernel can expose enough parallel work to saturate the device. We could have used Chen’s relation to recover each windowed signature as
$\mathbf{S}_{t_\ell,t_r}(X)=\mathbf{S}_{0,t_\ell}(X)^{-1}\otimes \mathbf{S}_{0,t_r}(X)$
from precomputed expanding-window signatures, as in \texttt{Signatory} \cite{kidger2020signatory}, but this can be numerically unstable and memory-intensive for long sequences, and typically offers marginal benefit unless the windows have a high degree of overlap.
\section{Benchmarks}
In our benchmarks, we use \texttt{keras\_sig} \cite{genet2025keras} and \texttt{pySigLib} \cite{shmelev2025pysiglib} as references, since both have been shown to deliver comparable or superior performance to previous libraries. All experiments were conducted on a single NVIDIA H200 GPU with 140 GB of VRAM. The host system was an Intel Xeon Platinum 856-series machine with 24 allocated CPU cores and 256 GB of RAM. We note, however, that similar results were observed on consumer-grade GPUs.

We evaluate performance across three primary metrics: runtime, memory usage, and scaling behavior. To ensure reliability, each benchmark included 3--10 warm-up runs, followed by averaged measurements over 10--50 runs. We report results for both forward-only execution and training. Here, training refers to a single training step consisting of one forward pass and its corresponding backward pass.

\begin{remark}
\texttt{pySigLib} computes signatures and log-signatures on the CPU, noting that for their algorithms this can be as fast as, or faster than, a GPU implementation \cite{pysiglibdocs}. This choice can be sensible when signatures are used as a preprocessing step or when their cost is a small fraction of the overall workload. In large-scale multi-GPU training, however, placing a core part of the model’s computation on the host tends to scale poorly. It competes with CPU resources needed for data loading and preprocessing. Moreover, CPU shared-memory parallelism, as used by \texttt{pySigLib}, typically saturates at modest thread counts, so the host is likely to become the bottleneck as GPU throughput increases.
\end{remark}

\subsection{Truncated Signature Performance}
Overall, \texttt{pathsig} consistently outperforms \texttt{keras\_sig} and \texttt{pySigLib}, with median speedups of $12.44\times$ and $40.11\times$ for signature computation, and median training speedups of $7.88\times$ and $24.88\times$, respectively. Figure~\ref{fig:pathsig-keras-speedup} summarizes the results for 775 configurations with $d \in \{3,4,6,8,10,20,30,40\}$ and $N \in \{2,3,4,5,6\}$. The resulting truncated signature sizes range from $12$ to $299,600$. 

\begin{figure}[!htbp]
  \centering
  \includegraphics[width=0.95\linewidth]{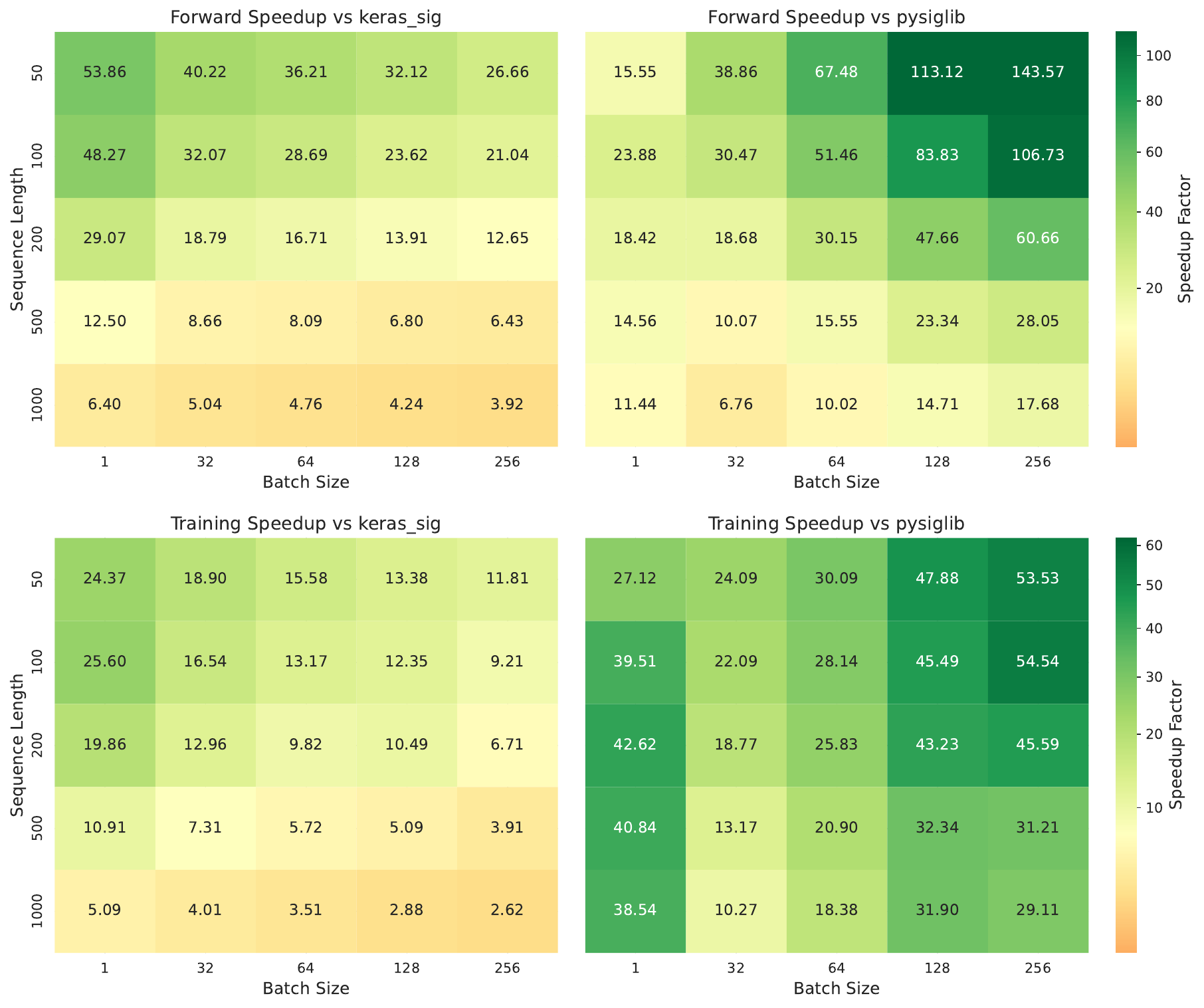}
  \caption{Speedup of \texttt{pathsig} relative to \texttt{keras\_sig} and \texttt{pySigLib}, averaged over 27 signature configurations for each combination of batch size and sequence length.}
  \label{fig:pathsig-keras-speedup}
\end{figure}
\FloatBarrier

Among the benchmarked configurations, \texttt{pathsig} improves with batch size and signature size until the GPU is saturated, after which runtime scales approximately linearly in both. Because \texttt{pathsig} does not parallelise over sequence length (unlike \texttt{keras\_sig}), relative speedups tend to be largest when sequence lengths are small and diminish as sequence length increases. Nevertheless, in the training benchmark, slowdowns occur only in three cases at $(B, M)=(1,1000)$ and depth $N=2$, with a minimum speedup of $0.94 \times$. Table~\ref{tab:speedup_train_selected} reports representative results.

\begin{table}[!htbp]
  \centering
  \footnotesize
  \setlength{\tabcolsep}{4pt}
  \caption{Selected training-time speedups of \texttt{pathsig} relative to \texttt{keras\_sig} and \texttt{pySigLib}.}
  \label{tab:speedup_train_selected}
  \begin{tabular}{llr rrr rr}
    \toprule
    & & & \multicolumn{3}{c}{Training time} & \multicolumn{2}{c}{Speedup of \texttt{pathsig}} \\
    \cmidrule(lr){4-6}\cmidrule(lr){7-8}
    $(B,M,d)$ & $N$ & Sig dim
    & \texttt{keras\_sig}
    & \texttt{pySigLib}
    & \texttt{pathsig}
    & vs \texttt{keras\_sig}
    & vs \texttt{pySigLib} \\
    \midrule

    \multicolumn{8}{l}{\textit{Effect of depth} $N$}\\
    (32, 100, 6) & 2 & 42    & 2.08 ms  & 3.88 ms  & 151.29 $\mu$s & $13.77\times$ & $25.65\times$ \\
    (32, 100, 6) & 3 & 258   & 5.84 ms  & 3.83 ms  & 270.70 $\mu$s & $21.57\times$ & $14.16\times$ \\
    (32, 100, 6) & 4 & 1.6K  & 4.56 ms  & 9.12 ms  & 207.52 $\mu$s & $21.97\times$ & $43.96\times$ \\
    (32, 100, 6) & 5 & 9.3K  & 8.65 ms  & 23.86 ms & 555.63 $\mu$s & $15.57\times$ & $42.95\times$ \\
    (32, 100, 6) & 6 & 56.0K & 18.25 ms & 64.60 ms & 3.36 ms       & $5.43\times$  & $19.22\times$ \\
    \midrule

    \multicolumn{8}{l}{\textit{Effect of seq length} $M$}\\
    (64,  50, 4) & 6 & 5.5K  & 9.62 ms  & 18.59 ms  & 351.67 $\mu$s & $27.35\times$ & $52.87\times$ \\
    (64, 100, 4) & 6 & 5.5K  & 17.73 ms & 31.66 ms  & 673.91 $\mu$s & $26.31\times$ & $46.98\times$ \\
    (64, 200, 4) & 6 & 5.5K  & 17.68 ms & 44.68 ms  & 1.31 ms       & $13.49\times$ & $34.11\times$ \\
    (64, 500, 4) & 6 & 5.5K  & 19.50 ms & 87.35 ms  & 3.25 ms       & $5.99\times$  & $26.85\times$ \\
    (64,1000, 4) & 6 & 5.5K  & 17.51 ms & 156.85 ms & 6.45 ms       & $2.72\times$  & $24.33\times$ \\
    \midrule

    \multicolumn{8}{l}{\textit{Effect of batch size} $B$}\\
    (  1, 200, 10) & 4 & 11.1K & 8.99 ms  & 16.07 ms  & 320.35 $\mu$s & $28.07\times$ & $50.15\times$ \\
    ( 32, 200, 10) & 4 & 11.1K & 9.53 ms  & 32.94 ms  & 1.36 ms       & $6.98\times$  & $24.14\times$ \\
    ( 64, 200, 10) & 4 & 11.1K & 9.19 ms  & 46.98 ms  & 2.59 ms       & $3.55\times$  & $18.14\times$ \\
    (128, 200, 10) & 4 & 11.1K & 10.27 ms & 161.84 ms & 5.01 ms       & $2.05\times$  & $32.31\times$ \\
    (256, 200, 10) & 4 & 11.1K & 19.79 ms & 235.71 ms & 9.83 ms       & $2.01\times$  & $23.97\times$ \\
    \bottomrule
  \end{tabular}
\end{table}
\FloatBarrier

\subsection{Memory Usage}
Memory usage, a frequent bottleneck in signature methods, remains exceptionally low with \texttt{pathsig}. In single precision (\texttt{float32}), the theoretical minimum memory required to store only the signature output is
$$
  \mathrm{Mem}_{\mathrm{out}} \approx 4\,B\,D_{\text{sig}} \ \text{bytes},
$$
where $D_{\text{sig}} := \sum_{n=1}^{N} d^n$ is the dimension of the truncated signature (excluding level 0) and $B$ is the batch size (number of paths). Even at the most challenging benchmarked configurations, peak memory remains modest for \texttt{pathsig}, with it staying near $\approx 2\times$ the memory required for the signature output.

Table~\ref{tab:vram_h200_fp32_selected} reports selected peak VRAM allocations during training for \texttt{pathsig} and \texttt{keras\_sig}. Since \texttt{pySigLib} runs on the CPU, its memory usage is in host RAM rather than VRAM, so a direct comparison is not meaningful and we omit it here. We note, however, that both \texttt{pathsig} and \texttt{pySigLib} have memory footprints that scale as $O(BD_{\text{sig}})$, whereas \texttt{keras\_sig} scales as $O(BMD_{\text{sig}})$. Consequently, several \texttt{keras\_sig} configurations terminate with OOM (out-of-memory) errors despite the 140\,GB of VRAM available on the H200.

\begin{table}[!htbp]
  \centering
  \footnotesize
  \setlength{\tabcolsep}{5pt}
  \caption{Selected peak VRAM allocations during training.}
  \label{tab:vram_h200_fp32_selected}
  \begin{tabular}{llr rrrrr}
    \toprule
    $(B,M,d)$ & $N$ & Sig dim
    & $\mathrm{Mem}_{\mathrm{out}}$
    & \texttt{keras\_sig}
    & \texttt{pathsig}
    & Reduction ($\times$) \\
    \midrule
    \multicolumn{7}{l}{\textit{Effect of depth} $N$}\\
    (32, 50, 8) & 3 & 584 & 0.1 MB  & 24.2 MB   & 0.3 MB  & $81\times$ \\
    (32, 50, 8) & 4 & 4.7K & 0.6 MB  & 245.5 MB  & 1.3 MB  & $189\times$ \\
    (32, 50, 8) & 5 & 37.4K & 4.6 MB  & 1.53 GB   & 9.3 MB  & $165\times$ \\
    (32, 50, 8) & 6 & 299.6K & 36.6 MB & 11.64 GB  & 73.3 MB & $159\times$ \\
    \midrule
    \multicolumn{7}{l}{\textit{Effect of seq length} $M$}\\
    (32,  50, 8) & 6 & 299.6K & 36.6 MB & 11.64 GB   & 73.3 MB & $159\times$ \\
    (32, 100, 8) & 6 & 299.6K & 36.7 MB & 23.48 GB   & 73.5 MB & $319\times$ \\
    (32, 200, 8) & 6 & 299.6K & 36.8 MB & 47.16 GB   & 73.9 MB & $638\times$ \\
    (32, 400, 8) & 6 & 299.6K & 37.0 MB & 94.51 GB   & 74.7 MB & $1{,}265\times$ \\
    (32, 800, 8) & 6 & 299.6K & 37.4 MB & \textsc{OOM} & 76.3 MB & --- \\
    (32,1600, 8) & 6 & 299.6K & 38.1 MB & \textsc{OOM} & 79.4 MB & --- \\
    \midrule
    \multicolumn{7}{l}{\textit{Effect of batch size} $B$}\\
    ( 32, 50, 8) & 6 & 299.6K & 36.6 MB  & 11.64 GB & 73.3 MB  & $159\times$ \\
    ( 64, 50, 8) & 6 & 299.6K & 73.2 MB  & 23.27 GB & 146.7 MB & $159\times$ \\
    (128, 50, 8) & 6 & 299.6K & 146.5 MB & 46.55 GB & 293.3 MB & $159\times$ \\
    (256, 50, 8) & 6 & 299.6K & 293.0 MB & 93.10 GB & 586.7 MB & $159\times$ \\
    \bottomrule
  \end{tabular}
\end{table}
\FloatBarrier

\subsection{Log-Signature Performance}
For our log-signature implementation, we benchmark against \texttt{pySigLib}. Using the same configurations as in the truncated signature benchmark, Figure~\ref{fig:pathsig-log-speedup} shows that \texttt{pathsig} achieves even larger speedups. 

\begin{figure}[!htbp]
  \includegraphics[width=\linewidth]{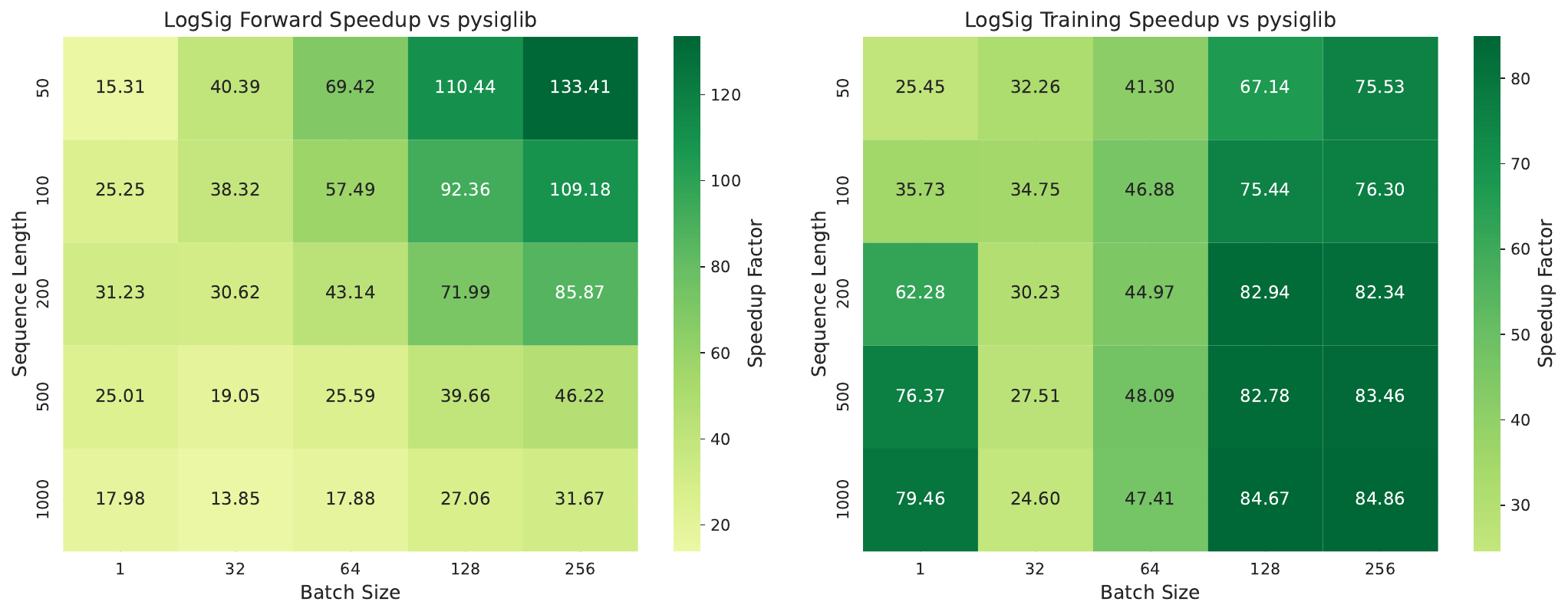}
  \caption{Speedup of \texttt{pathsig} relative to \texttt{pySigLib}, averaged over 27 log-signature configurations for each combination of batch size and sequence length.}
  \label{fig:pathsig-log-speedup}
\end{figure}
\FloatBarrier

The larger speedups stem from \texttt{pathsig} using an alternative projection that avoid materialising all signature coordinates up to the truncation depth. As a result, log-signature computation is often $2$--$3\times$ faster than the corresponding signature computation in \texttt{pathsig} (Table~\ref{tab:speedup_logsig_train_selected}).

\begin{table}[!htbp]
  \centering
  \footnotesize
  \setlength{\tabcolsep}{4pt}
  \caption{Selected log-signature training-time speedups of \texttt{pathsig} relative to \texttt{pySigLib}.}
  \label{tab:speedup_logsig_train_selected}
  \begin{tabular}{llrrrrr}
    \toprule
    $(B,M,d)$ & $N$
    & LogSig dim
    & \texttt{pySigLib}
    & \texttt{pathsig}
    & Speedup ($\times$) \\
    \midrule

    \multicolumn{6}{l}{\textit{Effect of depth} $N$}\\
    (32, 100, 6) & 3 & 91   & 5.92 ms  & 321.22 $\mu$s & $18.43\times$ \\
    (32, 100, 6) & 4 & 406  & 12.14 ms & 554.58 $\mu$s & $21.89\times$ \\
    (32, 100, 6) & 5 & 2.0K & 29.24 ms & 430.57 $\mu$s & $67.91\times$ \\
    (32, 100, 6) & 6 & 9.7K & 80.61 ms & 1.38 ms       & $58.34\times$ \\
    \midrule

    \multicolumn{6}{l}{\textit{Effect of seq length} $M$}\\
    (64,  50, 4) & 6 & 964 & 22.81 ms  & 652.84 $\mu$s & $34.94\times$ \\
    (64, 100, 4) & 6 & 964 & 37.46 ms  & 497.20 $\mu$s & $75.35\times$ \\
    (64, 200, 4) & 6 & 964 & 41.32 ms  & 806.30 $\mu$s & $51.25\times$ \\
    (64, 500, 4) & 6 & 964 & 89.96 ms  & 1.69 ms       & $53.08\times$ \\
    (64,1000, 4) & 6 & 964 & 170.48 ms & 3.18 ms       & $53.64\times$ \\
    \midrule

    \multicolumn{6}{l}{\textit{Effect of batch size} $B$}\\
    (  1, 200, 10) & 4 & 2.9K & 20.89 ms  & 601.12 $\mu$s & $34.75\times$ \\
    ( 32, 200, 10) & 4 & 2.9K & 40.73 ms  & 629.47 $\mu$s & $64.70\times$ \\
    ( 64, 200, 10) & 4 & 2.9K & 47.47 ms  & 1.13 ms       & $42.13\times$ \\
    (128, 200, 10) & 4 & 2.9K & 110.66 ms & 1.89 ms       & $58.55\times$ \\
    (256, 200, 10) & 4 & 2.9K & 189.75 ms & 3.51 ms       & $54.14\times$ \\
    \bottomrule
  \end{tabular}
\end{table}
\FloatBarrier

\subsection{Windowed Signature Performance}
We benchmark windowed signature computation against \texttt{pySigLib}, using the same configurations as in our other benchmarks and restricting to cases with signature size below $10,000$. For each signature configuration, we benchmarked across batch sizes of 1, 16, and 32, along with window lengths from 4 to 128 and number of windows from 2 to 1024, both varying in powers of 2.

Across the resulting 2,700 configurations, we observed speedups ranging from $3.92\times$ to $6,380\times$, with a median of $153\times$ and a mean of $395\times$. The benchmark shows that the speedup for \texttt{pathsig} increases until all GPU resources are exhausted, after which the speedup converges to the two- to three-digit range. Figure~\ref{fig:win-train-scaling} shows the scaling with number of windows for different batch sizes.

\begin{figure}[!htbp]
  \centering
  \includegraphics[width=0.49\linewidth]{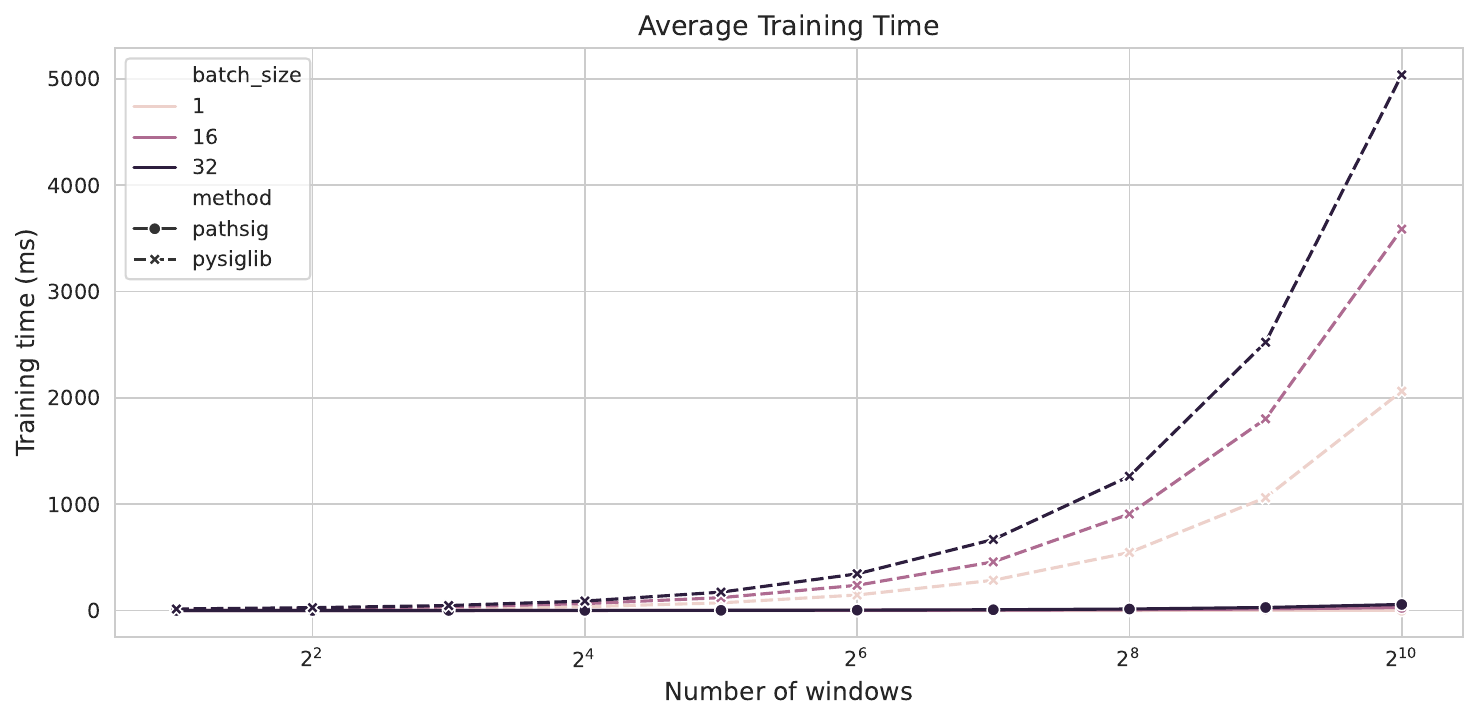}\hfill
  \includegraphics[width=0.49\linewidth]{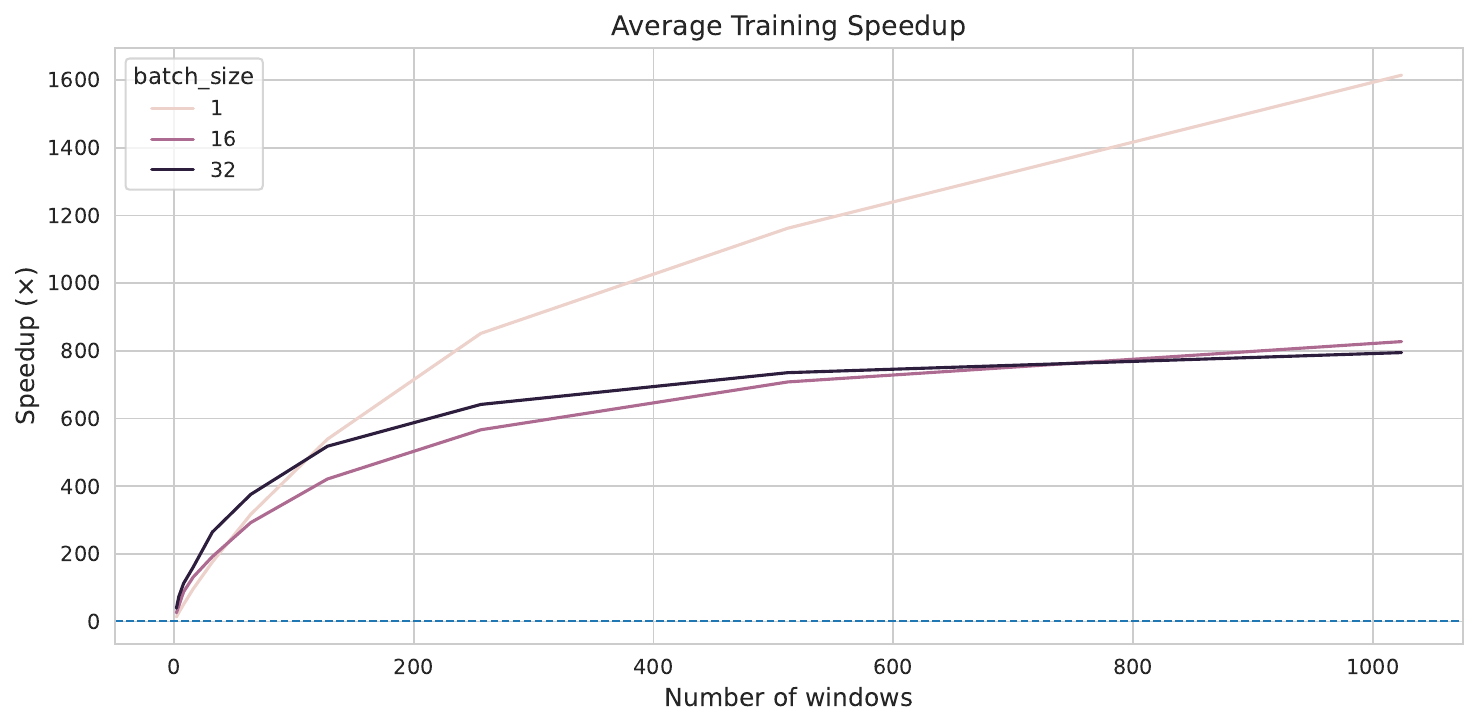}
  \caption{Average training time (left) and corresponding speedup (right) for windowed signature computation.}
  \label{fig:win-train-scaling}
\end{figure}
\FloatBarrier

\section{Signature Projections}
\label{sec:projections}
The signature is infinite-dimensional, so any practical use requires selecting a finite set of coordinates. The canonical choice is truncation at depth $N$,
$$
\pi_{\le N}\bigl(\sig_{0,T}(X)\bigr)
=\bigl(1,\sig^{(1)}_{0,T}(X),\ldots,\sig^{(N)}_{0,T}(X)\bigr)\in T_{\le N}(\mathbb R^d),
$$
which preserves the graded algebraic structure up to degree $N$ and thus permits computation using only tensors in $(\mathbb R^d)^{\otimes n}$ for $n\le N$. For machine-learning tasks, however, computing all coefficients up to depth $N$ is rarely intrinsic. Universal approximation results ensure that signature features are expressive, but they do not imply that $\W_{\le N}$ is the best coordinate choice at a fixed budget. In fact, when one fits linear functionals on truncated signatures with $\ell^1$-regularisation, the selected coefficients are often sparse \cite{guo2023lasso}.

Because our computational approach extends to arbitrary finite word sets, it is not restricted to truncation. This enables general projections with minimal structural assumptions. Since an appropriate projection is inherently task-dependent, we treat this as a modelling decision rather than prescribing a particular scheme. The resulting projected signatures remain fully differentiable and can be used directly in gradient-based training. Such projections can yield more compact representations and permit inclusion of selected higher-order terms without incurring the cost of full truncation at the corresponding depth. 

\subsection{Word Projections}
The most general projection supported by \texttt{pathsig} is onto an arbitrary prescribed set of words. For a non-empty $\mathcal I\subset\mathcal W$, this projection is given by
$$
\pi_\mathcal I\bigl(\sig_{0,T}(X)\bigr):=\bigl\{\sig_{0,T}(X,w)\bigr\}_{w\in\mathcal I} \in\mathbb R^{|\mathcal I|}.
$$

The selection of a word projection may be guided by computational budget, structure, or heuristics. In some applications, the target functional is known (or hypothesised) to depend on specific words or prescribed linear combinations of signature coefficients. One such example is signature payoffs \cite{arribas2018signaturepayoffs}, where even derivatives with complex path dependence often can be described by relatively simple payoff functionals. As a result, often only a small set of signature coordinates, indexed by particular words, is needed to represent the payoff. Even when only a small collection of signature terms is retained, one can often recover strong expressive power by allowing richer function classes on the retained terms. In particular, polynomials in finitely many signature terms enjoy a universal approximation property for functionals of non-geometric rough paths \cite{harang2024universal}. Similarly, the signature coefficients of Lyndon words form a minimal algebraic generating set (via shuffle relations) \cite{gaines1994algebra}, so a sufficiently nonlinear downstream model can, in principle, learn the relevant non-Lyndon signature coefficients from these generators. 

Other ways to construct word sets include imposing structural constraints or using variable-selection methods for high-dimensional feature spaces. A convenient approach is to encode prior knowledge about interactions between channels through a directed acyclic graph
$$
G=(A, E), \quad A=\{1, \ldots, d\}
$$
where an edge $(i, j) \in E$ indicates that channel $i$ can be followed by channel $j$. The induced word set
$$
\mathcal{W}_{\leq N}(G)=\left\{(i_1, \dots, i_n) : n \leq N, \left(i_k, i_{k+1}\right) \in E \text { for all } k=1, \ldots, n-1 \right\}
$$
has the advantage of being a hierarchical sparsification. This construction is particularly useful when channels have an underlying geometry as $E$ can be chosen to reflect local neighborhood structure or interaction constraints induced by a model (e.g., state-space couplings). 

In the machine-learning setting, word projections offer several extensions of signature-based methods. Each projection of the signature onto a prescribed word set is differentiable (Section~\ref{sec:backprop}) and can therefore be made learnable in the same spirit as deep signature models \cite{deepsig2019}. Such models apply a learnable transformation $\phi_\theta$ to $X$ and compute its truncated signature,
$$
X \longmapsto \pi_{\le N}\!\left(\sig_{0,T}\!\left(\phi_\theta(X)\right)\right),
$$
which is then passed to a downstream model that may include additional signature layers. However, using a single transformation $\phi_\theta$ for all retained signature coordinates up to level $N$ ties them to a single representation of the path and hence restricts the effective degrees of freedom. With word projections, we may instead partition $\mathcal I\subset\W$ into $\left\{\mathcal{I}_j\right\}_{j=1}^J$. By associating a separate learnable transformation $\phi_{\theta_j}^{(j)}$ to each subset of words, we can obtain a more expressive map
$$
X \longmapsto
\Bigl(
\pi_{\mathcal I_1}\bigl(\sig_{0,T}(\phi_{\theta_1}^{(1)}(X))\bigr),\;
\ldots,\;
\pi_{\mathcal I_J}\bigl(\sig_{0,T}(\phi_{\theta_J}^{(J)}(X))\bigr)
\Bigr)
\;\in\;
\R^{|\mathcal I_1|}\times\cdots\times \R^{|\mathcal I_J|}
\cong \R^{|\mathcal I|}.
$$
This allows different parts of the signature to adapt to different representations of the path. In similar fashion, one could also associate one or more learnable transformations to specific linear functionals on the signature.

\subsection{Anisotropic Signature}
The truncation level of the truncated signature is often chosen with the regularity of the underlying path $X$ in mind. However, because truncation is by word length, it does not distinguish between channels and therefore imposes uniform regularity across channels. The idea that a driving signal may be inhomogeneous across channels can be traced back to Lyons’ seminal paper \cite{lyons1998differential}. It has since been formalized as geometric $\Pi$-rough paths and anisotropic geometric rough paths \cite{gyurko2012pirough,tapiazambotti2020branched}. Likewise, we define the anisotropic signature by replacing word length by a weighted notion of degree.
\begin{definition}\label{def:inhom_trunc}
Fix weights $\gamma=(\gamma_1,\ldots,\gamma_d)\in\mathbb R_+^d$. For a word $w=(i_1,\ldots,i_n)\in\mathcal W$ define its weighted degree
$$
|w|_\gamma:=\gamma_{i_1}+\cdots+\gamma_{i_n},
\qquad
|\eps|_\gamma:=0.
$$
\end{definition}
Given a threshold $r\in\mathbb R_+$, we define the admissible set of words as
$$
\mathcal W^\gamma_{\le r}:=\{w\in\mathcal W:\ |w|_\gamma\le r\},
$$
and the anisotropic signature with cutoff $r$ by
$$
\mathcal S^{\gamma}_{\le r}(X)_{0,T}:=\bigl\{\mathcal S_{0,T}(X,w)\bigr\}_{w\in\mathcal W^\gamma_{\le r}}.
$$

As with truncation, Chen's relation and related algebraic properties remain after restricting to $\W^\gamma_{\le r}$. This can make the anisotropic signature a useful middle ground between full truncation and word projections, providing a structured way to reduce dimensionality when channel regularity is believed to be inhomogeneous. In application, the weights $\gamma$ may be treated as hyperparameters that control which channels contribute higher-order signature coordinates.

\section{Illustrative Example: Sparse Lead–Lag Signature Projection}
To demonstrate how alternative signature projections can reduce dimensionality while preserving predictive performance, we consider the problem of estimating the Hurst parameter $H$ of a multivariate fractional Brownian motion (fBM). For this we use a deep signature model similar to the one presented in \cite{deepsig2019}, with the exception that we consider a multivariate fBM and the lead-lag transformation. We recall the standard lead--lag transform (see, e.g., \cite{FlintHamblyLyons2016Hoff}).

\begin{definition}[Lead-lag transform]
Let $0=t_0<\dots<t_M=T$ and let $X:[0,T]\to\mathbb{R}^d$ be a path.
Define a sequence $(\widehat X_m)_{m=0}^{2M}$ in $\mathbb R^{2d}$ by
$$
\widehat X_{2k}   := (X_{t_k},\,X_{t_k}),\qquad
\widehat X_{2k+1} := (X_{t_k},\,X_{t_{k+1}}),\qquad k=0,\dots,M-1,
$$
and $\widehat X_{2M}:=(X_{t_M},X_{t_M})$.
The \emph{lead-lag path} $\widehat X:[0,T]\to\mathbb{R}^{2d}$ is the piecewise linear interpolation of the points
$\widehat X_0,\widehat X_1,\dots,\widehat X_{2M}$.
\end{definition}

An important property of the lead-lag path $\widehat{X}$ is that the antisymmetric part of its level-2 signature encodes the signed area between the lead and lag channels. In the semimartingale setting, this area coincides with the discrete quadratic (co)variation of the underlying path $X$, and it converges to the usual quadratic (co)variation as the mesh tends to zero. This yields the Itô-Stratonovich correction terms, so the ltô integral can be recovered from the lead-lag construction \cite[Theorem~5.1]{FlintHamblyLyons2016Hoff}. However, for this purpose the lead-lag transformation can be quite redundant. If $X$ is a $d$-dimensional continuous semimartingale with independent components, the only non-zero quadratic (co)variations are those of the form $\left[X^i, X^i\right]$ for $i=1, \ldots, d$. Hence many lead-lag signature terms that would encode quadratic covariation are identically trivial. 

For our experiment we consider a $d$-dimensional fractional Brownian motion $X=(X^1,\ldots,X^d)$ with independent components. Fractional Brownian motion is not a semimartingale when $H \neq \frac{1}{2}$, so the usual Itô calculus does not apply. Nevertheless, for $H>\frac{1}{4}$ it admits a geometric rough path lift \cite{friz2010multidimensional}, so second-level iterated integrals are well defined and the lead--lag transform captures the corresponding area type interactions. We label the coordinates of its
lead--lag transform by the alphabet
$$
\mathcal{A}_{LL}=\{\ell_1,\ldots,\ell_d,\,L_1,\ldots,L_d\},
$$
where $\ell_i$ denotes the lag channel and $L_i$ the lead channel for the $i$-th coordinate of $X$. Since the components are independent, our sparse word projection excludes words generated by $(\ell_i,L_j)$ or $(L_i,\ell_j)$ for $i\neq j$. Concretely, let
$$
\mathcal{G}=\left\{\eps, (L_1), \ldots, (L_d)\right\} \cup\left\{\left(\ell_i, L_i\right),\left(L_i, \ell_i\right): i=1, \ldots, d\right\},
$$
and define
$$
\mathcal{W}_{\leq N}^{\text {sparse }}=\left\{w=u_1 \circ \cdots \circ u_p: u_j \in \mathcal{G},|w| \leq N\right\}.
$$

In our experiments, this sparse word projection attains lower test error than the corresponding truncated signature and exhibits slightly improved learning curves (Figure~\ref{fig:lering}). At the same time, it reduces feature dimension by 6.25× and end-to-end training time by 2.24×.
\begin{figure}[!htbp]
  \centering
  \includegraphics[width=0.85\linewidth]{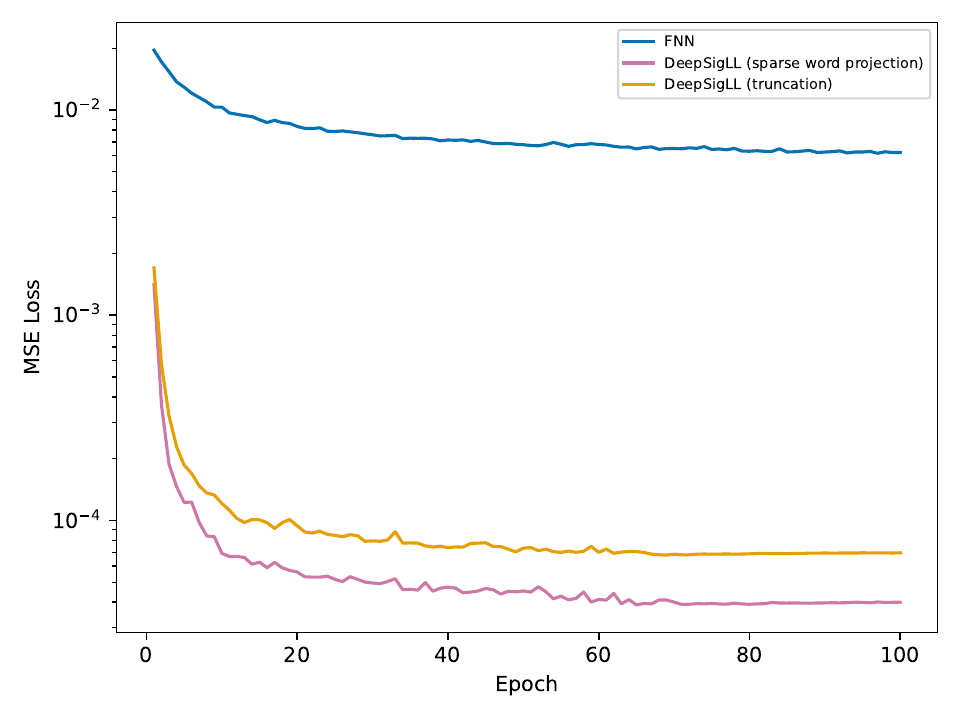}
  \caption{Validation MSE versus epoch (mean over 5 independent runs) for Hurst parameter estimation on $5$-dimensional fBM with independent components. We train on $8{,}000$ simulated paths of length $250$ with $H \sim U(0.25,0.75)$ sampled independently per path, and evaluate on $2{,}000$ held-out paths. We compare an FNN (Feedforward Neural Network) baseline to the Deep Signature Model using either a truncated lead--lag signature or the proposed sparse lead--lag word projection.}
  \label{fig:lering}
\end{figure}
\FloatBarrier

\newpage
\bibliography{references}

\newpage
\appendix
\section{Representation of Words on a Computer}
Central to our computational approach is the ability to index signature terms by words and to perform operations on those words. Although the user supplies words as a nested iterable like
\begin{verbatim}
[(0, 3, 1), (5, 3, 2, 4), ...],
\end{verbatim}
we represent words internally as base-$d$ integers over a 0-based alphabet $\{0, \ldots, d-1\}$. For the truncated signature, this representation does not require any additional storage.

\begin{definition}[Word integer encoding]
For each $n\ge 1$, define the word-to-integer map
$$
\begin{aligned}
\phi_n:\W_n &\longrightarrow \{0,\ldots,d^n-1\},\\
w=(i_1,\ldots,i_n) &\longmapsto \sum_{j=1}^n i_j\, d^{\,n-j}.
\end{aligned}
$$
\end{definition}

For each fixed level $n$, the map $\phi_n$ is bijective by Proposition~\ref{prop:word_bijection}, so every word in $\W_n$ has a unique integer encoding. Adding the cumulative level offset then ensures that indices from different levels do not overlap, yielding a single consistent indexing of all words in $\W_{\leq N}$. 

\begin{proposition}
\label{prop:word_bijection}
The map $\phi_n: \W_n \to \{0, \ldots, d^n - 1\}$ is a bijection preserving lexicographic order: if $w_1 <_{\text{lex}} w_2$, then $\phi_n(w_1) < \phi_n(w_2)$.
\end{proposition}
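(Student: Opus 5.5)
This is base-$d$ positional notation. Throughout, we use the 0-based alphabet $\{0,\ldots,d-1\}$, so each letter satisfies $0\le i_j\le d-1$.

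\emph{Well-definedness of the codomain.} First check that $\phi_n$ lands in $\{0,\ldots,d^n-1\}$. Each term is nonnegative, and
$$\phi_n(w)\le\sum_{j=1}^n (d-1)d^{n-j}=d^n-1,$$
by the telescoping geometric sum.

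\emph{Order preservation.} Let $w_1=(i_1,\ldots,i_n)$ and $w_2=(i'_1,\ldots,i'_n)$ with $w_1<_{\text{lex}}w_2$. For words of equal length this means there is a first index $k$ with $i_j=i'_j$ for all $j<k$ and $i_k<i'_k$. The common leading terms cancel, so
$$\phi_n(w_2)-\phi_n(w_1)=(i'_k-i_k)d^{n-k}+\sum_{j>k}(i'_j-i_j)d^{n-j}.$$
The first term is at least $d^{n-k}$. Each remaining difference satisfies $i'_j-i_j\ge-(d-1)$, so the tail is at least $-(d-1)\sum_{j=k+1}^n d^{n-j}=-(d^{n-k}-1)$. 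Hence $\phi_n(w_2)-\phi_n(w_1)\ge 1>0$. This bound on the tail is the only real step and is the same geometric sum as above.

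\emph{Bijectivity.} Lexicographic order is a total order on $\W_n$, so distinct words are comparable. Strict monotonicity then gives injectivity. Both $\W_n$ and $\{0,\ldots,d^n-1\}$ are finite sets of cardinality $d^n$, so an injection between them is a bijection.

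Alternatively, surjectivity can be shown constructively: the base-$d$ digits of any $m<d^n$, padded with leading zeros to length $n$, give a preimage. The counting argument is shorter, so I would use it.
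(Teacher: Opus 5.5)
Your proof is correct and takes the same approach as the paper. The paper only cites uniqueness of base-$d$ representation, and your range bound, geometric tail estimate and counting argument unpack exactly that fact, including the order-preservation claim the paper leaves implicit. You were also right to use the 0-based alphabet $\{0,\ldots,d-1\}$ from the appendix, since with the letters $\{1,\ldots,d\}$ of the main text the map would not land in $\{0,\ldots,d^n-1\}$.
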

\begin{proof}
This follows from the uniqueness of the base-$d$ representation.
\end{proof}

\subsection{Operations on Words}
Representing words as base-$d$ integers also allows for operations on words through simple arithmetic operations.

\begin{proposition}[Concatenation]\label{prop:concat_formula}
Let $u = (i_1, \ldots, i_k) \in \W_k$ and $v = (j_1, \ldots, j_m) \in \W_m$ be words with encoding $\phi_k(u)$ and $\phi_m(v)$. Then the encoding of the concatenated word $u \circ v = (i_1, \ldots, i_k, j_1, \ldots, j_m) \in \W_{k+m}$ is
$$
\phi_{k+m}(u \circ v) = \phi_k(u) \cdot d^m + \phi_m(v).
$$
\end{proposition}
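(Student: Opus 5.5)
The plan is to verify the identity directly from the definition of $\phi_n$, by splitting the defining sum for the concatenated word into the part contributed by $u$ and the part contributed by $v$. Write $u\circ v=(c_1,\ldots,c_{k+m})$, where $c_r=i_r$ for $1\le r\le k$ and $c_{k+s}=j_s$ for $1\le s\le m$. By definition,
$$
\phi_{k+m}(u\circ v)=\sum_{r=1}^{k+m} c_r\, d^{\,k+m-r}.
$$
I will split this sum at $r=k$.

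For the first $k$ terms, factor out $d^m$. Since $d^{\,k+m-r}=d^m\cdot d^{\,k-r}$, we get
$$
\sum_{r=1}^{k} i_r\, d^{\,k+m-r}=d^m\sum_{r=1}^k i_r\, d^{\,k-r}=d^m\,\phi_k(u).
$$
For the remaining $m$ terms, reindex with $r=k+s$. This gives
$$
\sum_{s=1}^m j_s\, d^{\,m-s}=\phi_m(v).
$$
Adding the two parts yields the claimed formula.

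The only point needing care is the edge cases $k=0$ or $m=0$, where one factor is the empty word $\eps$. The paper defines $\phi_n$ only for $n\ge1$. I would adopt the natural convention $\phi_0(\eps)=0$, with the empty sum, and observe that the formula then holds trivially: $\phi_m(v)=0\cdot d^m+\phi_m(v)$ and $\phi_k(u)\cdot d^0+0=\phi_k(u)$. Alternatively, I would simply restrict to $k,m\ge1$ as the statement implicitly does.

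I do not expect any real obstacle here. The computation is purely the reindexing above, and injectivity (Proposition~\ref{prop:word_bijection}) is not even needed for the identity itself.
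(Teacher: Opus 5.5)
Your proof is correct and matches the paper's argument: split the defining sum of $\phi_{k+m}(u\circ v)$ at position $k$, factor $d^m$ out of the first $k$ terms to obtain $d^m\phi_k(u)$, and reindex the remaining terms to obtain $\phi_m(v)$. Your handling of the edge cases $k=0$ or $m=0$ via the convention $\phi_0(\eps)=0$ is a harmless addition that the paper does not treat.
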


\begin{proof}
Let $u = (i_1, \ldots, i_k)$ and $v = (j_1, \ldots, j_m)$. Then,
\begin{align*}
\phi_{k+m}(u \circ v) &= \sum_{r=1}^{k} i_r d^{k+m-r} + \sum_{s=1}^{m} j_s d^{m-s} \\
&= d^m \sum_{r=1}^{k} i_r d^{k-r} + \sum_{s=1}^{m} j_s d^{m-s} \\
&= d^m \cdot \phi_k(u) + \phi_m(v).
\end{align*}
\end{proof}

We can also easily get the encoding of prefix and suffix words, which, with its level offset, provides its unique index in the signature.

\begin{corollary}[Prefix extraction]\label{prop:prefix_extract}
Let $w = u \circ v \in \W_{k+m}$ where $u \in \W_k$ and $v \in \W_m$. Then,
$$
\phi_k(u) = \left\lfloor \frac{\phi_{k+m}(w)}{d^m} \right\rfloor.
$$
\end{corollary}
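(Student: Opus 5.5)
The identity $\phi_{k+m}(w)=d^m\phi_k(u)+\phi_m(v)$ from Proposition~\ref{prop:concat_formula} already isolates $\phi_k(u)$ as the quotient in a division by $d^m$. So the plan is to invoke that proposition for $w=u\circ v$, giving
$$
\phi_{k+m}(w)=d^m\,\phi_k(u)+\phi_m(v),
$$
and then to read off the quotient using uniqueness of Euclidean division by $d^m$.

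The only thing to check is that $\phi_m(v)$ is a valid remainder, namely $0\le \phi_m(v)\le d^m-1$. This follows from Proposition~\ref{prop:word_bijection}, since $\phi_m$ maps $\W_m$ into $\{0,\ldots,d^m-1\}$. Directly, it is also the estimate
$$
0\le \sum_{s=1}^m j_s d^{m-s}\le (d-1)\sum_{s=1}^m d^{m-s}=d^m-1,
$$
using $j_s\in\{0,\ldots,d-1\}$ in the 0-based alphabet.

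With this in hand, divide the displayed identity by $d^m$:
$$
\frac{\phi_{k+m}(w)}{d^m}=\phi_k(u)+\frac{\phi_m(v)}{d^m}.
$$
Since $\phi_k(u)$ is an integer and $0\le \phi_m(v)/d^m<1$, taking floors yields $\lfloor \phi_{k+m}(w)/d^m\rfloor=\phi_k(u)$.

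I do not expect any real obstacle. The edge cases $m=0$ (where $d^0=1$ and the floor of an integer is itself) and $k=0$ (with the convention $\phi_0(\eps)=0$, where the quotient is $0$) are handled by the same argument. As a by-product, the remainder $\phi_{k+m}(w)\bmod d^m$ equals $\phi_m(v)$, which gives the analogous suffix extraction.
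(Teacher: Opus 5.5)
Your proposal is correct and matches the paper's proof: both apply Proposition~\ref{prop:concat_formula} to write $\phi_{k+m}(w)=d^m\phi_k(u)+\phi_m(v)$ and then use $0\le\phi_m(v)<d^m$ to identify $\phi_k(u)$ as the floor of the quotient. Your direct bound on $\phi_m(v)$ and the edge cases $k=0$, $m=0$ are harmless additions. Note that $\phi_0(\eps)=0$ is your own convention, since the paper defines $\phi_n$ only for $n\ge 1$.
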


\begin{proof}
By Proposition~\ref{prop:concat_formula},
$$
\phi_{k+m}(w) = \phi_k(u) \cdot d^m + \phi_m(v).
$$
Since $0 \leq \phi_m(v) < d^m$, it follows that
$$
\phi_k(u) = \left\lfloor \frac{\phi_{k+m}(w)}{d^m} \right\rfloor.
$$
\end{proof}

\begin{corollary}[Suffix extraction]\label{prop:suffix_extract}
Let $w = u \circ v \in \W_{k+m}$ where $u \in \W_k$ and $v \in \W_m$. Then,
$$
\phi_m(v) = \phi_{k+m}(w) \bmod d^m.
$$
\end{corollary}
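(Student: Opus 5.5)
The plan mirrors the proof of Corollary~\ref{prop:prefix_extract}: write the encoding of $w$ as an explicit quotient--remainder decomposition with divisor $d^m$, then invoke uniqueness of Euclidean division. By Proposition~\ref{prop:concat_formula}, with $w=u\circ v$, $u\in\W_k$ and $v\in\W_m$, we have
$$
\phi_{k+m}(w)=\phi_k(u)\cdot d^m+\phi_m(v).
$$
The first term is an integer multiple of $d^m$. Reducing both sides modulo $d^m$ therefore gives $\phi_{k+m}(w)\equiv \phi_m(v) \pmod{d^m}$.

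It remains to pass from this congruence to the equality $\phi_m(v)=\phi_{k+m}(w)\bmod d^m$, where $\bmod$ denotes the least nonnegative residue. For that we need $0\le\phi_m(v)<d^m$. This follows from Proposition~\ref{prop:word_bijection}, since $\phi_m$ maps $\W_m$ into $\{0,\ldots,d^m-1\}$. One can also see it directly: each letter satisfies $0\le j_s\le d-1$, so
$$
\phi_m(v)\le (d-1)\sum_{s=1}^m d^{m-s}=d^m-1.
$$
Uniqueness of the remainder in the division of $\phi_{k+m}(w)$ by $d^m$ then identifies $\phi_m(v)$ with $\phi_{k+m}(w)\bmod d^m$.

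There is no real obstacle here. The only point needing care is the bound $0\le\phi_m(v)<d^m$, which is exactly what turns the congruence into an equality with the canonical residue. The degenerate case $m=0$ also deserves a remark: then $v=\eps$, and under the convention $\phi_0(\eps)=0$ the statement reads $0=\phi_k(w)\bmod 1$, which is trivially true.
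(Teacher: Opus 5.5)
Your proof is correct and follows the paper's argument exactly: apply the concatenation formula $\phi_{k+m}(w)=\phi_k(u)\,d^m+\phi_m(v)$, then use $0\le\phi_m(v)<d^m$ to identify $\phi_m(v)$ as the remainder of $\phi_{k+m}(w)$ on division by $d^m$. Your explicit check of that bound and your remark on the degenerate case $m=0$ (using the convention $\phi_0(\eps)=0$, since the paper defines $\phi_n$ only for $n\ge 1$) are harmless details that the paper leaves implicit.
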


\begin{proof}
By Proposition~\ref{prop:concat_formula},
$$
\phi_{k+m}(w) = \phi_k(u) \cdot d^m + \phi_m(v).
$$
Since $0 \leq \phi_m(v) < d^m$, it follows that
$$
\phi_m(v) = \phi_{k+m}(w) \bmod d^m.
$$
\end{proof}

\subsection{Packing Letters}
In applying Chen's relation, each thread requires frequent access to the letters $\left(i_1, \ldots, i_n\right)$ of the word it is responsible for. To avoid expensive integer divisions and modulo operations in tight loops, we decode the letters once from the integer encoding and pack them into a single 64-bit integer:
$$
\text{packed\_letters}
=\sum_{j=1}^n i_j\,2^{b(j-1)},
\qquad \text{where}\qquad
\left\lceil\log _2 d\right\rceil \leq b \leq\lfloor 64 / n\rfloor.
$$

This can be reused for all prefixes and suffixes of the same word, and enables letter extraction using cheap bit shifts and masks.

\end{document}